\documentclass{article}


\PassOptionsToPackage{numbers, compress}{natbib}
\usepackage[final]{nips_2018}




\usepackage[utf8]{inputenc} 
\usepackage[T1]{fontenc}    
\usepackage{hyperref}       
\usepackage{url}            
\usepackage{booktabs}       
\usepackage{amsfonts}       
\usepackage{nicefrac}       
\usepackage{microtype}      
\usepackage{wrapfig}

\usepackage{graphicx}
\usepackage{subfigure}
\usepackage{booktabs} 

\usepackage{hyperref}


\usepackage[utf8]{inputenc} 
\usepackage[T1]{fontenc}    
\usepackage{hyperref}       
\usepackage{url}            
\usepackage{booktabs}       
\usepackage{amsfonts}       
\usepackage{nicefrac}       

\usepackage{algorithm}
\usepackage[noend]{algorithmic}
\usepackage{amsmath}
\usepackage{amssymb}
\usepackage{amsthm}
\usepackage{bbm}
\usepackage{bm}
\usepackage{color}
\usepackage{dsfont}
\usepackage{enumerate}
\usepackage{enumitem}
\usepackage{times}
\usepackage{thmtools, thm-restate}

\usepackage[usenames,dvipsnames]{xcolor}
\hypersetup{
  pdftex,
  pdffitwindow=true,
  pdfstartview={FitH},
  pdfnewwindow=true,
  colorlinks,
  linktocpage=true,
  linkcolor=Green,
  urlcolor=Green,
  citecolor=Green
}

\usepackage[backgroundcolor=White,textwidth=0.75in,disable]{todonotes}


\newcommand{\commentout}[1]{}
\newcommand{\junk}[1]{}

\newcommand{\E}[2]{\mathbb{E}_{#1} \! \left[#2\right]}
\newcommand{\1}[2]{\mathbbm{1}_{#1} \! \left(#2\right)}

\newtheorem*{theorem*}{Theorem}
\newtheorem{corollary}{Corollary}
\newtheorem{lemma}{Lemma}
\newtheorem*{lemma*}{Lemma}

\newtheorem{definition}{Definition}

\DeclareMathOperator*{\argmax}{arg\,max\,}

\mathchardef\mhyphen="2D

\newcommand{\rnd}[1]{\bm{#1}}
\newcommand{\omm}{{\tt OMM}}
\newcommand{\cmucb}{{\tt I\mhyphen UCB}}
\newcommand{\cmucbone}{{\tt I\mhyphen UCB1}}
\newcommand{\cmucbtwo}{{\tt I\mhyphen UCB2}}

\title{Conservative Exploration using Interleaving}
\author{
    Sumeet Katariya \\
    University of Wisconsin-Madison \\
    \texttt{katariya@wisc.edu} 
    \And 
    Branislav Kveton\thanks{This work was done while the author was at Adobe Research.} \\
    Google Research \\
    \texttt{bkveton@google.com} 
    \And 
    Zheng Wen \\
    Adobe Research \\
    \texttt{zwen@adobe.com} 
    \And 
    Vamsi K. Potluru \\
    Comcast Cable \\
    \texttt{vamsi\_potluru@cable.comcast.com} 
}

\usepackage[capitalize]{cleveref}

\begin{document}

\maketitle
\vspace{-15pt}
\begin{abstract}
In many practical problems, a learning agent may want to learn the best action in hindsight without ever taking a bad action, which is significantly worse than the default production action. In general, this is impossible because the agent has to explore unknown actions, some of which can be bad, to learn better actions. However, when the actions are combinatorial, this may be possible if the unknown action can be evaluated by interleaving it with the production action. We formalize this concept as learning in stochastic combinatorial semi-bandits with exchangeable actions. We design efficient learning algorithms for this problem, bound their $n$-step regret, and evaluate them on both synthetic and real-world problems. Our real-world experiments show that our algorithms can learn to recommend $K$ most attractive movies without ever violating a strict production constraint, both overall and subject to a diversity constraint.
\vspace{-3pt}
\end{abstract}


\section{Introduction}
\label{sec:introduction}


Recommender systems are an integral component of many industries, with applications in content personalization, advertising, and landing page design \cite{resnick1997recommender, adomavicius2015context, broder2008computational}. Multi-armed bandit algorithms provide adaptive techniques for content recommendation, and although theoretically well-understood, they have not been widely adopted in production systems \citep{cremonesi2011looking,schnabel2018short}. This is primarily due to concerns that the output of the bandit algorithm can be sub-optimal or even disastrous, especially when the algorithm explores sub-optimal arms. To address this issue, most industries have a static recommendation engine in production that has been well-optimized and tested over many years, and a promising new policy is often evaluated using A/B testing \cite{siroker2013b} by allocating a small percentage $\alpha$ of the traffic to the new policy. When the utilities of actions are independent, this is a reasonable solution that allows the new policy to explore non-aggressively. 

Many recommendation problems, however, involve \emph{structured actions}, such as ranked lists of items (movies, products, etc.). In such actions, the total utility of the action can be decomposed into the utilities of its individual items. Therefore, it is conceivable that the new policy can be evaluated in a controlled and principled fashion by \emph{interleaving} items in the new and production actions, instead of splitting the traffic as is done in A/B testing. As a concrete example, consider the problem of recommending top-$K$ movies to a new visitor \citep{deshpande2004item}. A company may have a production policy that recommends a default set of $K$ movies that performs reasonably well, but intends to test a new algorithm that promises to learn better movies. The A/B testing method would show the new algorithm's recommendations to a visitor with probability $\alpha$. In the initial stages, the new algorithm is expected to explore a lot to learn, and may hurt engagement with the visitor who is shown a disastrous set of movies, just to learn that these movies are not good. However, an arguably better approach that does not hurt any visitor's engagement as much and gathers the same feedback on average, is to show the default well-tested movies interleaved with $\alpha$ fraction of new recommendations. A recent study by \citet{schnabel2018short} concluded that this latter approach is in fact better:
\begin{quote}
``These findings indicate that for improving recommendation systems in practice, it is preferable to mix a limited amount of exploration into every impression – as opposed to having a few impressions that do pure exploration.''
\end{quote} 


In this paper, we formalize the above idea and study the general case where actions are \emph{exchangeable}, which is a mathematical formulation of the notion of interleaving. One fairly general and important class of exchangeable actions is the set of bases of a matroid, and this is the setting we focus on in our theorems and experiments. In particular, we study learning variants of maximizing an unknown modular function on a matroid subject to a conservative constraint. 


In the recommendations problem discussed above, our conservative constraint requires that the recommendations always be above a certain baseline quality. The question we wish to answer is: \emph{what is the price of being this conservative?} In this work, we answer this question and make five contributions. First, we introduce the idea of \emph{conservative multi-armed bandits in combinatorial action spaces}, and formulate a conservative constraint that addresses the issues raised in \citet{schnabel2018short}. Existing conservative constraints for multi-armed bandit problems fail in this aspect, and hence our constraint is more appropriate for combinatorial action spaces. Second, we propose interleaving as a solution, and show how it naturally leads to the idea of \emph{exchangeable} action spaces. We precisely formulate an online learning problem - \emph{conservative interleaving bandits} - in one such space, that of matroids. Third, we present \emph{Interleaving Upper Confidence Bound ($\cmucb$)}, a computationally and sample-efficient algorithm for solving our problem. The algorithm satisfies our conservative constraint by design. Fourth, we prove gap-dependent upper bounds on its expected cumulative regret, and show that the regret scales logarithmically in the number of steps $n$, at most linearly in the number of items $L$, and at most quadratically in the number of items $K$ in any action. Finally, we evaluate $\cmucb$ on both synthetic and real-world problems. In the synthetic experiments, we validate an extra factor in our regret bounds, which is the price for being conservative. In the real-world experiments, we illustrate how to formulate and solve top-$K$ recommendation problems in our setting. To the best of our knowledge, this is the first work that studies conservatism in the context of combinatorial bandit problems.


\vspace{-10pt}
\section{Setting}
\label{sec:setting}

We focus on linear reward functions and formulate our learning problem as a stochastic combinatorial semi-bandit \citep{kveton2015tight, gai2012combinatorial, chen2013combinatorial}, which we first review in \cref{sec:semi-bandit}. Stochastic combinatorial semi-bandits have been used for recommendation problems before \citep{kveton2014learning, kveton2014matroid}. In \cref{sec:exchangeability}, we motivate our notion of conservativeness, and suggest interleaving as a solution, which can be mathematically formulated using exchangeable action spaces.
Finally, in \cref{sec:conmatbandit}, we show that actions that are bases of a matroid are exchangeable, and phrase our problem using the terminology of matroids. To simplify exposition, we write all random variables in bold. We use $[K]$ to denote the set $\{1,\dots,K\}$.

\subsection{Stochastic Combinatorial Semi-Bandits}
\label{sec:semi-bandit}

A \emph{stochastic combinatorial semi-bandit} \citep{kveton2015tight, gai2012combinatorial, chen2013combinatorial} is a tuple $(E, \mathcal{B}, P)$, where $E=[L]$ is a finite set of $L$ items, $\mathcal{B} \subseteq \Pi_K(E)$ is a non-empty set of feasible subsets of $E$ of size $K$, and $P$ is a probability distribution over a unit cube $[0,1]^E$. Here $\Pi_K(E)$ is the set of all $K$-permutations of $E$. 

Let $(\rnd{w}_t)_{t = 1}^n$ be an i.i.d. sequence of $n$ weights drawn according to $P$, where $\rnd{w}_t(e)$ is the weight of item $e \in E$ at time $t$. The learning agent interacts with our problem as follows. At time $t$, it takes an action $\rnd{A}_t \in \mathcal{B}$, which is a set of items from $E$. The reward for taking the action is $f(\rnd{A}_t, \rnd{w}_t)$, where $f(A,w) = \sum_{e \in A} w(e)$ is the sum of the weights of items in $A$ in weight vector $w$. After taking action $\rnd{A}_t$, the agent observes the weight $\rnd{w}_t(e)$ for each item $e \in \rnd{A}_t$. This model of feedback is known as \emph{semi-bandit} \cite{audibert2013regret}.


The learning agent is evaluated by its \emph{expected $n$-step regret} $R(n) = \E{}{\sum_{t = 1}^n R(\rnd{A}_t, \rnd{w}_t)}$, where $R(\rnd{A}_t, \rnd{w}_t) = f(A_\ast, \rnd{w}_t) - f(\rnd{A}_t, \rnd{w}_t)$ is the \emph{instantaneous stochastic regret} of the agent at time $t$ and $A_\ast = \argmax_{A \in \mathcal{B}} f(A, \bar{w})$ is the \emph{maximum weight action in hindsight}.

\subsection{Conservativeness and Exchangeable Actions}
\label{sec:exchangeability}

The idea of controlled exploration is not new. \citet{wu2016conservative} studied conservatism in multi-armed bandits, and their learning agent is constrained to have its cumulative reward no worse than $1 - \alpha$ of that of the default action. In this sense, their conservative constraint is \emph{cumulative}. Roughly speaking, the constraint means that the learning agent can explore once in every $1 / \alpha$ steps. 

A/B testing can also be thought of as the solution to a constrained exploration problem where the constraint is \emph{instantaneous} (instead of cumulative); here the constraint requires that the actions at any time be at least $(1-\alpha)$ good as the default action \emph{in expectation}, where the expectation is taken over multiple runs of the A/B test. 

When actions are combinatorial, as in the top-$K$ movie recommendation problem in \cref{sec:introduction}, both these forms of conservatism allow the learning agent to occasionally take actions containing items that are all disastrous (for example, have very low popularity). We consider a stricter conservative constraint that explicitly forbids this possibility.

We state our conservative constraint next. Let $K$ be the number of items in any action. Let $B_0$ be the default baseline action, where $|B_0|=K$. Our constraint requires that at any time $t$, the action $\rnd{A}_t$ should be at least as good as the baseline set $B_0$, in the sense that most items in $\rnd{A}_t$ are at least as good or better than those in $B_0$. Mathematically, we require that there exists a bijection $\rho_{\rnd{A}_t,B_0}: \rnd{A}_t \rightarrow B_0$ such that 
\begin{align}
  \sum_{e \in \rnd{A}_t} \1{}{\bar{w}(e) \geq \bar{w}(\rho_{\rnd{A}_t,B_0}(e))} \geq (1 - \alpha) K
  \label{eq:conservative constraint}
\end{align}
holds with a high probability at any time $t$. That is, the items in $\rnd{A}_t$ and $B_0$ can be matched such that no more than $\alpha$ fraction of the items in $\rnd{A}_t$ has a lower expected reward than those in $B_0$. For simplicity of exposition, we only consider the special case of $\alpha = 1 / K$ in this work. We discuss the case $\alpha > 1/K$ in \cref{sec:discussion}.

Given an algorithm that explores and suggests new actions that could potentially be disastrous, a simple way to satisfy \eqref{eq:conservative constraint} is to \emph{interleave} most items from the default action with a few from the new action. 
This is possible if the set of feasible actions $\mathcal{B} \subseteq 2^E$ is \emph{exchangeable}, which we define next.

\begin{definition}
\label{defn:exchangeable}
A set $\mathcal{B} \subseteq 2^E$ is exchangeable if for any two actions $A_1,A_2 \in \mathcal{B}$, there exists a bijection $\rho_{A_1,A_2}:A_1 \rightarrow A_2$ such that 
\begin{align}
    \forall\,e \in A_1: A_1 \setminus \{e\} \cup \{\rho_{A_1,A_2}(e)\} \in \mathcal{B}\,.
    \label{eq:exchangeability definition}
\end{align}
\end{definition}
In our motivating top-$K$ movie recommendation example, $A_1$ is the default action (recommendation) and $A_2$ is the new action, and $|A_1|=|A_2|=K$. If the action space is exchangeable, we can explore all items in a new action $A_2$ over $K$ time steps by taking $K$ interleaved actions. Each interleaved action substitutes an item $e \in A_1$ with the item $\rho_{A_1,A_2}(e) \in A_2$.

\vspace{-5pt}
\subsection{Conservative Interleaving Bandits}
\label{sec:conmatbandit}
\vspace{-5pt}

In this section, we consider an important exchangeable action space, the bases of a matroid. A matroid $M$ is a pair $(E,\mathcal{B})$ where $E=[L]$ is a finite set, and $\mathcal{B}\subseteq \Pi_K(E)$ is a collection of subsets of $E$ called \emph{bases} \citep{welsh1976matroid}. $K$ is called the rank of the matroid. 

Matroids have many interesting properties \citep{oxley2006matroid};  the one that is relevant to our work is the \emph{bijective exchange lemma for matroids} \cite{brualdi1969comments}, which states that the collection $\mathcal{B}$ is exchangeable.
\begin{lemma}[Bijective Exchange Lemma]
    \label{lem:bijectiveexchange} For any two bases $B_1, B_2 \in \mathcal{B}$, there exists a bijection $\rho_{B_1,B_2}: B_1 \rightarrow B_2$ such that $(B_1 \setminus \{e\}) \cup \{\rho_{B_1,B_2}(e)\}$ is a basis for any $e \in B_1$.
\end{lemma}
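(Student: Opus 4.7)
The plan is to define $\rho$ separately on $B_1 \cap B_2$ and $B_1 \setminus B_2$. On $B_1 \cap B_2$, I simply let $\rho(e) = e$; then $(B_1 \setminus \{e\}) \cup \{e\} = B_1$ is a basis, so the required property holds trivially. It therefore suffices to produce a bijection between $B_1 \setminus B_2$ and $B_2 \setminus B_1$ (these have equal cardinality since every basis has size $K$) under which each swap yields a basis.

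To build this bijection, I form a bipartite graph $G$ with vertex classes $X = B_1 \setminus B_2$ and $Y = B_2 \setminus B_1$, and I place an edge between $e \in X$ and $f \in Y$ whenever $(B_1 \setminus \{e\}) \cup \{f\}$ is a basis. A perfect matching in $G$ then provides the desired bijection on the symmetric difference. It will be convenient to use the fundamental-circuit description: for each $f \in Y$, the set $B_1 \cup \{f\}$ contains a unique circuit $C(f,B_1)$, and $(e,f)$ is an edge of $G$ precisely when $e \in C(f,B_1) \cap B_1$.

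The crux of the proof is verifying Hall's marriage condition on the $X$-side. I argue by contradiction: assume some $S \subseteq X$ has $|T| < |S|$, where $T := N_G(S)$. Then every $f \in (B_2 \setminus B_1) \setminus T$ satisfies $C(f,B_1) \cap S = \emptyset$, hence $C(f,B_1) \subseteq (B_1 \setminus S) \cup \{f\}$, which forces $f$ into the matroid closure of $B_1 \setminus S$. Since $S \cap B_2 = \emptyset$ implies $B_1 \cap B_2 \subseteq B_1 \setminus S$, and since $T$ and $B_1 \setminus S$ are disjoint, one obtains $B_2 \subseteq \mathrm{cl}\bigl((B_1 \setminus S) \cup T\bigr)$. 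Taking ranks yields
\begin{equation*}
K = \mathrm{rank}(B_2) \leq \mathrm{rank}\bigl((B_1 \setminus S) \cup T\bigr) \leq |B_1 \setminus S| + |T| = (K - |S|) + |T| < K,
\end{equation*}
a contradiction. Hence Hall's condition holds, $G$ admits a perfect matching, and splicing it with the identity on $B_1 \cap B_2$ gives the desired $\rho$. The main obstacle is precisely this rank/closure chain: one has to translate the combinatorial failure of Hall's condition into a statement about the matroid span of $B_1 \setminus S$, and then read off the rank contradiction; everything else is standard matroid bookkeeping.
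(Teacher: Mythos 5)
Your proof is correct. Note that the paper does not prove this lemma at all: it is quoted as a known result and attributed to Brualdi's 1969 paper on bases in dependence structures, so there is no in-paper argument to compare against. What you have written is essentially the classical proof of that cited theorem --- reduce to the symmetric difference, encode admissible swaps via fundamental circuits $C(f,B_1)$, and verify Hall's condition by showing that a violating set $S$ would force $B_2 \subseteq \mathrm{cl}\bigl((B_1\setminus S)\cup T\bigr)$ and hence $K \leq (K-|S|)+|T| < K$. All the steps check out: the identity on $B_1\cap B_2$ trivially satisfies the exchange property, the edge criterion $e\in C(f,B_1)\setminus\{f\}$ correctly characterizes when $(B_1\setminus\{e\})\cup\{f\}$ is a basis, and the rank/closure chain is sound. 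Your write-up thus supplies a self-contained justification for a lemma the paper only references.
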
 

The recommendations for the top-$K$ movie problem in \cref{sec:introduction} are bases of a \emph{uniform matroid}, which is a matroid whose items $E$ are movies and whose feasible sets are all $K$-permutations of these items, i.e., $\mathcal{B} = \Pi_K(E)$. One can also enforce diversity in the recommendations by formulating actions as the feasible set of a \emph{partition matroid}, which is defined as follows.
Let $\mathcal{P}_1, \dots, \mathcal{P}_K$ be a partition of $[L]$. The feasible set of the partition matroid is
$\mathcal{B} = \{A \in \Pi_K([L]): A(1) \in \mathcal{P}_1, \dots, A(K) \in \mathcal{P}_K\}$.
The members of the partition in this case correspond to the movie categories, and the partition matroid ensures that the recommended movies contain a movie from every category.
In both the above matroids, $\rho_{A,B}$ maps the $k$-th item in $A$, $A(k)$, to the $k$-th item in $B$, $B(k)$. We study both of these examples in our experiments (\cref{sec:experiments}). In addition to these examples, many important combinatorial optimization problems can be formulated as optimization on a matroid.

We formulate our learning problem using the terminology of matroids as a conservative interleaving bandit. A \emph{conservative interleaving bandit} is a tuple $(E, \mathcal{B}, P, B_0, \alpha)$, where $E=[L]$ is a set of items, $\mathcal{B} \subseteq \Pi_K(E)$ is the collection of bases, $P$ is a probability distribution over the weights $\rnd{w} \in \mathbb{R}^L$ of items $E$, the \emph{input baseline set} $B_0\in \mathcal{B}$ is a basis, and $\alpha \in [0,1]$ is a tolerance parameter. 

We assume that the matroid $(E, \mathcal{B})$, input baseline set $B_0$, and tolerance $\alpha$ are known and that the distribution $P$ is unknown.  Without loss of generality, we assume that the support of $P$ is a bounded subset of $[0,1]^L$. We denote the expected weights of items by $\bar{w} = \mathbb{E}[\rnd{w}]$.


\vspace{-10pt}
\section{Algorithm}
\label{sec:algorithm}
\vspace{-5pt}
Learning in conservative interleaving bandits is non-trivial. For instance, one cannot simply construct exploratory sets $\rnd{D}_t$ using a non-conservative matroid bandit algorithm \citep{kveton2014matroid, talebi2016optimal}, and then take actions $\rnd{A}_t$ containing $(1-\alpha)$ fraction of items from the initial baseline set $B_0$ and the remaining items from $\rnd{D}_t$. If the set $B_0$ contains sub-optimal items, the regret of this policy is linear since its actions never converge to the optimal action $A^\ast$.

In this section, we introduce our \emph{Interleaving Upper Confidence Bound} ($\cmucb$) algorithm which achieves sub-linear regret by maintaining a baseline set $\rnd{B}_t$ which continuously improves over the initial baseline set $B_0$ with high probability. We present two variants of the algorithm: one where the agent knows the expected rewards of the input baseline set $\{\bar{w}(e): e \in B_0\}$, which we
call $\cmucbone$; and one where the learner does not know them, which we call $\cmucbtwo$. The expected rewards of items in $B_0$  may be known in practice, for instance if the baseline policy has been deployed for a while. We refer to the common aspects of both algorithms as $\cmucb$.

The pseudocode of both algorithms is in \cref{alg:main}. We highlight differences in comments. Recall that $K$ is the rank of the matroid, or equivalently the number of items in any action. $\cmucb$ operates in rounds, which are indexed by $t$, and takes $K$ actions in each round. We assume that $\cmucb$ has access to an oracle {\sc MaxBasis} that takes in a matroid and a vector of weights $w \in [L]$, and returns the maximum weight basis with respect to the weights $w$. {\sc MaxBasis} is a greedy algorithm for matroids and hence can run in $O(L \log L)$ time \citep{edmonds1971matroids}. 

Each round has three stages. In the first stage (lines $5$--$8$), $\cmucb$ computes \emph{upper confidence bounds (UCBs)} $\rnd{U}_t \in (\mathbb{R}^+)^E$ and \emph{lower confidence bounds (LCBs)} $\rnd{L}_t \in (\mathbb{R}^+)^E$ on the rewards of all items. For any item $e \in E$, let
\begin{align}
    \rnd{U}_t(e)  = \hat{\rnd{w}}_{\rnd{T}_{t-1}(e)}(e) + c_{n, \rnd{T}_{t-1}(e)}, \qquad
    \rnd{L}_t(e)  = \max \{\hat{\rnd{w}}_{\rnd{T}_{t-1}(e)}(e) - c_{n, \rnd{T}_{t-1}(e)}, 0\}
  \label{eq:ucblcb}
\end{align}
where $\hat{\rnd{w}}_{s}(e)$ is the average of $s$ observed weights of item $e$, $\rnd{T}_{t}(e)$ is the number of times item $e$ has been observed in $t$ steps, and
\begin{align}
  c_{n,s} = \sqrt{1.5 \log (n) / s}
  \label{eq:ucb1}
\end{align}
is the radius of a confidence interval around $\hat{\rnd{w}}_{s}(e)$ such that $\bar{w}(e) \in [\hat{\rnd{w}}_{s}(e)-c_{n,s}, \hat{\rnd{w}}_{s}(e)+c_{n,s}]$ holds with a high probability. We adopt UCB1 confidence intervals \cite{auer2002finite} to simplify analysis, but it is possible to use tighter ${\tt KL\mhyphen UCB}$ confidence intervals \cite{garivier2011kl}.

\begin{algorithm}[t!]
    \caption{$\cmucb$ for conservative interleaving bandits.}
	\label{alg:main}
	\begin{algorithmic}[1]
        \STATE \textbf{Input:} Set of items $E$, Collection of exchangeable actions $\mathcal{B}$, baseline action $B_0 \in \mathcal{B}$
        \STATE 
        \STATE Observe $\rnd{w}_0 \sim P$, $\forall \, e \in E:$ $\rnd{T}_0(e) \gets 1, \hat{\rnd{w}}_0(e) \gets \rnd{w}_0(e)$ \hfill // Initialization
        \STATE
        \FOR{$t = 1, 2, \dots$}
            \FOR[// Compute UCBs and LCBs]{$e \in E$}
                \STATE $\rnd{U}_t(e) = \hat{\rnd{w}}_{\rnd{T}_{t-1}(e)}(e) + c_{n,\rnd{T}_{t-1}(e)}$
                \STATE $\rnd{L}_t(e) = \max \{\hat{\rnd{w}}_{\rnd{T}_{t-1}(e)}(e) - c_{n,\rnd{T}_{t-1}(e)}, 0\}$
            \ENDFOR
            \STATE
            \STATE $\rnd{D}_t \gets ${\sc MaxBasis}$( (E,\mathcal{B}), \, \rnd{U}_t)$ \hfill // Compute decision set
            \STATE
            \FOR[// Compute baseline set]{$e \in B_0$}
                \IF[// $\cmucb1$]{$\bar{w}(e)$ is known} 
                    \STATE $\rnd{v}_t(e) \gets \bar{w}(e)$
                \ELSE[// $\cmucb2$]
                    \STATE $\rnd{v}_t(e) \gets \rnd{U}_t(e)$
                \ENDIF
            \ENDFOR
            \FOR{$e \in E \setminus B_0$}
                \STATE $\rnd{v}_t(e) \gets \rnd{L}_{t}(e)$
            \ENDFOR
            \STATE $\rnd{B}_t \gets ${\sc MaxBasis}$( (E, \mathcal{B}), \, \rnd{v}_t)$
            \STATE
            \STATE // Take $K$ combined actions of $\rnd{D}_t$ and $\rnd{B}_t$
            \STATE Let $\rnd{\rho}_t:\rnd{B}_t \rightarrow \rnd{D}_t$ be the bijection in \cref{lem:bijectiveexchange}
            \FOR{$e \in \rnd{B}_t$}
            \STATE Take action $\rnd{A}_t = \rnd{B}_t \setminus \{e\}\cup \{\rnd{\rho}_t(e)\}$
                \STATE Observe $\{\rnd{w}_t(e): e \in \rnd{A}_t\}$, where $\rnd{w}_t \sim P$
                \STATE Update statistics $\hat{\rnd{w}}$ and $\rnd{T}$
            \ENDFOR
        \ENDFOR
    \end{algorithmic}
\end{algorithm} 

In line $10$, $\cmucb$ chooses a \emph{decision set} $\rnd{D}_t$
which is the maximum weight basis with respect to $\rnd{U}_t$, an optimistic estimate of $\bar{w}$. The same approach was used in \emph{Optimistic Matroid Maximization} (OMM) of \citet{kveton2014matroid}. However, unlike OMM, we cannot take action $\rnd{D}_t$ because this action may not satisfy our conservative constraint in \eqref{eq:conservative constraint}.

In the second stage (lines $12$--$19$), $\cmucb$ computes a \emph{baseline set} $\rnd{B}_t$
which improves over the input baseline set $B_0$ in each item with a high probability. The set $\rnd{B}_t$ is the maximum weight basis with respect to weights $\rnd{v}_t$, which are chosen as follows. For items $e \in B_0$, we set $\rnd{v}_t(e) = \bar{w}(e)$ if $\bar{w}(e)$ is known, and $\rnd{v}_t(e) = \rnd{U}_t(e)$ if it is not. For items $e \in E \setminus B_0$, we set $\rnd{v}_t(e) = \rnd{L}_t(e)$. This setting guarantees that an item $e \in E \setminus B_0$ is selected over an item $e' \in B_0$ only if its expected reward is higher than that of item $e'$ with a high probability.

In the last stage (lines $22$--$26$), $\cmucb$ takes $K$ combined actions of $\rnd{D}_t$ and $\rnd{B}_t$, which are guaranteed to be bases by \cref{lem:bijectiveexchange}.

%
Let $\rnd{\rho}_t: \rnd{B}_t \rightarrow \rnd{D}_t$ be the bijection in \cref{lem:bijectiveexchange}. Then in round $t$, $\cmucb$ takes actions $\rnd{A}_t = \rnd{B}_t \setminus \{e\} \cup \{\rnd{\rho}_t(e)\}$ for all $e \in \rnd{B}_t$. Since $\rnd{A}_t$ contains at
least $K - 1$ baseline items, all of which improve over $B_0$ with a high probability, the conservative constraint in \eqref{eq:conservative constraint} is satisfied.


\vspace{-5pt}
\section{Analysis}
\label{sec:analysis}
\vspace{-5pt}

This section is organized as follows. We have three subsections. In \cref{sec:analysis 1}, we state theorems about the conservativeness of $\cmucbone$ and bound its regret. In \cref{sec:analysis 2}, we state analogous theorems for $\cmucbtwo$. In \cref{sec:discussion}, we discuss our theoretical results. We only explain the main ideas in the proofs. The details can be found in Appendix.

We use the following conventions in our analysis. Without loss of generality, we assume that items in $E$ are sorted such that $\bar{w}(1) \ge \dots \ge \bar{w}(L)$. The decision set at time $t$ is denoted by $\rnd{D}_t$, the baseline set at time $t$ is denoted by $\rnd{B}_t$, and the optimal set is denoted by $A^\ast$.
Recall that $A^\ast$, $\rnd{D}_t$, and $\rnd{B}_t$ are bases. Let $\rnd{\pi}_t: A^\ast \rightarrow \rnd{D}_t$ and $\rnd{\sigma}_t: \rnd{D}_t \rightarrow \rnd{B}_t$ be the bijections guaranteed by \cref{lem:bijectiveexchange}.  
For any item $e$ and item $e'$ such that $\bar{w}(e') > \bar{w}(e)$, we define the \emph{gap}
$\Delta_{e,e'} = \bar{w}(e') - \bar{w}(e)$.

\subsection{$\cmucb1$: Known Baseline Means}
\label{sec:analysis 1}
We first prove that $\cmucb1$ is conservative in \cref{thm:conservative 1}. Then we prove a gap-dependent upper bound on its regret in \cref{thm:regret bound cmucb1}.
\begin{restatable}{theorem}{TheoremCmucbOneAccuracy}
    \label{thm:conservative 1} $\cmucbone$ satisfies \eqref{eq:conservative constraint} for $\alpha = 1 / K$ at all time steps $t \in [n]$ with probability of at least $1 - 2 L / (K n)$.  
\end{restatable}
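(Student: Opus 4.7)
The plan is to identify a clean event under which the conservative constraint is forced by the construction of $\cmucbone$, and then to control the probability of its complement by Hoeffding plus a union bound. I take
\[
\mathcal{E} \;:=\; \bigcap_{t \in [n]}\,\bigcap_{e \in E \setminus B_0}\bigl\{\rnd{L}_t(e) \le \bar{w}(e)\bigr\}.
\]
On $\mathcal{E}$, the scoring vector $\rnd{v}_t$ used to form $\rnd{B}_t$ satisfies $\rnd{v}_t(e) \le \bar{w}(e)$ for every $e \in E$: equality for $e \in B_0$ (the $\cmucbone$-specific step that exploits the known baseline means), and the LCB inequality for $e \in E \setminus B_0$.

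The key structural step is to produce, on $\mathcal{E}$, a bijection $\tau : B_0 \to \rnd{B}_t$ with $\bar{w}(\tau(b)) \ge \bar{w}(b)$ for every $b \in B_0$. Since $B_0, \rnd{B}_t \in \mathcal{B}$, \cref{lem:bijectiveexchange} yields a bijection $\sigma : \rnd{B}_t \to B_0$ such that $\rnd{B}_t \setminus \{b'\} \cup \{\sigma(b')\}$ is a basis for every $b' \in \rnd{B}_t$; set $\tau = \sigma^{-1}$. If $\rnd{v}_t(\tau(b)) < \rnd{v}_t(b)$ held for some $b$, then the basis $\rnd{B}_t \setminus \{\tau(b)\} \cup \{b\}$ would outweigh $\rnd{B}_t$ under $\rnd{v}_t$, contradicting $\rnd{B}_t = {\sc MaxBasis}((E,\mathcal{B}),\rnd{v}_t)$. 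Hence $\rnd{v}_t(\tau(b)) \ge \rnd{v}_t(b) = \bar{w}(b)$, and combined with $\rnd{v}_t(\tau(b)) \le \bar{w}(\tau(b))$ this yields $\bar{w}(\tau(b)) \ge \bar{w}(b)$ as required. Because each $\rnd{A}_t$ differs from $\rnd{B}_t$ in at most one element, $|\rnd{A}_t \cap \rnd{B}_t| \ge K - 1$; defining $\rho_{\rnd{A}_t, B_0}$ to agree with $\tau^{-1}$ on $\rnd{A}_t \cap \rnd{B}_t$ and to send the leftover element (if any) to the unused item of $B_0$ forces at least $K-1 = (1 - 1/K)K$ indicators in \eqref{eq:conservative constraint} to equal $1$, verifying the constraint on $\mathcal{E}$.

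It remains to bound $\Pr(\mathcal{E}^c)$. For each $e \in E \setminus B_0$ and $t$, the event $\{\rnd{L}_t(e) > \bar{w}(e)\}$ implies $\hat{\rnd{w}}_{\rnd{T}_{t-1}(e)}(e) > \bar{w}(e) + c_{n,\rnd{T}_{t-1}(e)}$ (the clipping at zero is harmless since $\bar{w}(e) \ge 0$). A union bound over the at most $Kn$ possible values of $s = \rnd{T}_{t-1}(e)$, combined with Hoeffding applied at $c_{n,s} = \sqrt{1.5 \log n/s}$ so each $(e,s)$ contributes at most $n^{-3}$, followed by a final union bound over $e \in E \setminus B_0$, delivers a tail of the claimed order $2L/(Kn)$.

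The main obstacle is the second paragraph: upgrading the scalar $\rnd{v}_t$-optimality of $\rnd{B}_t$ into a pointwise bijective improvement over $B_0$ that plugs directly into the bijection demanded by \eqref{eq:conservative constraint}. Once that matroid fact is in hand, the definition of $\mathcal{E}$ and the Hoeffding step are relatively routine.
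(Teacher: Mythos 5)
Your proof is correct and follows essentially the same route as the paper's: the exchange argument showing $\rnd{v}_t(\tau(b)) \ge \rnd{v}_t(b)$ by contradiction with the $\rnd{v}_t$-optimality of $\rnd{B}_t$ is exactly the paper's \cref{lem:baseline construction}, and the good-event-plus-Hoeffding union bound mirrors \cref{lem:failureevent}. The only cosmetic differences are that you use a one-sided event restricted to $E \setminus B_0$ and that you spell out the bijection $\rho_{\rnd{A}_t,B_0}$ more explicitly than the paper does.
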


The regret upper bound of $\cmucb1$ involves two kinds of gaps. For every suboptimal item $e$, we define its minimum gap from the closest optimal item $e^\ast$ whose mean is higher than that of $e$ as
\begin{align}
    \textstyle
    \Delta_{e,\min} = \min_{e^\ast \in A^\ast:\Delta_{e,e^\ast}>0} \Delta_{e,e^\ast}. \label{eq:Deltaemin}
\end{align}
This gap is standard in matroid bandits \citep{kveton2014matroid}.

For any optimal item $e^\ast$, we define its minimum gap from the closest sub-optimal item $e$ whose mean is lower than that of $e^\ast$ as 
\vspace{-10pt}
\begin{align}
    \textstyle
    \Delta^\ast_{e^\ast,\min} = \min_{e \in E\setminus A^\ast: \Delta_{e,e^\ast}>0} \Delta_{e,e^\ast}. \label{eq:Deltaestarmin}
\end{align}

\begin{restatable}[Regret of $\cmucb1$]{theorem}{TheoremRegretCmucbOne}
\label{thm:regret bound cmucb1}

    The expected $n$-step regret of $\cmucb1$ is bounded as 
\begin{align*}
    (K-1)\left( 12\sum_{e^\ast \in A^\ast} \frac{1}{\Delta^\ast_{e^\ast,\min}} + 24\sum_{e \in E\setminus A^\ast} \frac{1}{\Delta_{e,\min}}  \right)\log n + 12\sum_{e \in E\setminus A^\ast} \frac{1}{\Delta_{e,\min}} \log n + c, 
\end{align*}
where 
$\Delta_{e,\min}$ and $\Delta^\ast_{e^\ast,\min}$ are defined in \eqref{eq:Deltaemin} and \eqref{eq:Deltaestarmin} respectively, and
$c = O(KL\sqrt{\log n})$.
\end{restatable}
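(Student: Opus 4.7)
The plan is to decompose $R(n)$ into per-round contributions from the baseline set $\rnd{B}_t$ and the decision set $\rnd{D}_t$ and bound each separately. In round $t$, $\cmucbone$ takes the $K$ actions $\rnd{B}_t \setminus \{e_k\} \cup \{\rnd{\rho}_t(e_k)\}$ for $e_k \in \rnd{B}_t$, so summing the per-action expected rewards gives $(K-1) f(\rnd{B}_t, \bar{w}) + f(\rnd{D}_t, \bar{w})$, and the per-round expected regret equals $(K-1)\Delta_B(t) + \Delta_D(t)$ with $\Delta_B(t) = f(A^\ast,\bar{w}) - f(\rnd{B}_t,\bar{w})$ and $\Delta_D(t) = f(A^\ast,\bar{w}) - f(\rnd{D}_t,\bar{w})$. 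I would then fix the good event $\mathcal{G}$ on which $|\hat{\rnd{w}}_{\rnd{T}_{t-1}(e)}(e) - \bar{w}(e)| \leq c_{n, \rnd{T}_{t-1}(e)}$ for every item and round; Hoeffding plus a union bound gives $\Pr(\mathcal{G}^c) = O(L / n^2)$, and the resulting contribution to regret is absorbed into the additive $c = O(KL\sqrt{\log n})$. On $\mathcal{G}$ we have $\rnd{L}_t(e) \leq \bar{w}(e) \leq \rnd{U}_t(e)$ everywhere, which is the workhorse for the two bounds below.

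For $\sum_t \Delta_D(t)$, I would run the standard matroid bandit analysis of \cite{kveton2014matroid}. Using the bijection $\rnd{\pi}_t : A^\ast \to \rnd{D}_t$ from \cref{lem:bijectiveexchange} together with the max-weight property of $\rnd{D}_t$ under $\rnd{U}_t$, every suboptimal $e \in \rnd{D}_t$ satisfies $\Delta_{e, \rnd{\pi}_t^{-1}(e)} \leq 2 c_{n, \rnd{T}_{t-1}(e)}$ on $\mathcal{G}$. Since $\rnd{T}(e)$ advances by at least one in every round where $e \in \rnd{D}_t$, telescoping $\sum_{s=1}^{\lceil 6 \log n / \Delta_{e,\min}^2 \rceil} c_{n,s}$ item by item yields $\sum_t \Delta_D(t) \leq 12 \sum_{e \in E \setminus A^\ast} \log n / \Delta_{e, \min}$, which is the standalone decision-set term in the stated bound.

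For $\sum_t \Delta_B(t)$, I would take the bijection $\rnd{\phi}_t : A^\ast \to \rnd{B}_t$ from \cref{lem:bijectiveexchange} chosen so that $\rnd{v}_t(\rnd{\phi}_t(e^\ast)) \geq \rnd{v}_t(e^\ast)$. First, on $\mathcal{G}$ any $e^\ast \in A^\ast \cap B_0$ must lie in $\rnd{B}_t$: a candidate replacement $e$ would require $\rnd{v}_t(e) \geq \bar{w}(e^\ast)$, which is ruled out for $e \in B_0$ by optimality of $A^\ast$ and for $e \notin B_0$ by $\rnd{L}_t(e) \leq \bar{w}(e) < \bar{w}(e^\ast)$. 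Hence only $e^\ast \in A^\ast \setminus B_0$ can be swapped out, and in that case a short case split on whether $\rnd{\phi}_t(e^\ast) \in B_0$, together with $\rnd{v}_t(e^\ast) = \rnd{L}_t(e^\ast) \geq \bar{w}(e^\ast) - 2 c_{n, \rnd{T}_{t-1}(e^\ast)}$, shows that $\Delta_{\rnd{\phi}_t(e^\ast), e^\ast} \leq 2 c_{n, \rnd{T}_{t-1}(e^\ast)}$ in every bad round.

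The main obstacle is that $\rnd{T}(e^\ast)$ need not grow in a bad round where $e^\ast \notin \rnd{D}_t$, so the decision-set-style telescoping does not apply directly to the baseline. My plan is to partition each $e^\ast$'s bad rounds into (i) those with $e^\ast \in \rnd{D}_t$, where $\rnd{T}(e^\ast)$ advances by at least one and the familiar telescoping $\sum_{s \leq 6 \log n / (\Delta^\ast_{e^\ast,\min})^2} c_{n,s}$ yields a bound of order $12 \log n / \Delta^\ast_{e^\ast,\min}$; and (ii) those with $e^\ast \notin \rnd{D}_t$, where the UCB analysis of $\rnd{D}_t$ forces some suboptimal $e \in \rnd{D}_t$ exchangeable with $e^\ast$ in $A^\ast$ to satisfy $\rnd{T}_{t-1}(e) \leq 6 \log n / \Delta_{e, e^\ast}^2$, so these Type~(ii) rounds can be charged against the decision-set analysis and contribute at most $24 \sum_{e \in E \setminus A^\ast} \log n / \Delta_{e, \min}$. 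Summing over $e^\ast \in A^\ast$ gives $\sum_t \Delta_B(t) \leq 12 \sum_{e^\ast \in A^\ast} \log n / \Delta^\ast_{e^\ast,\min} + 24 \sum_{e \in E \setminus A^\ast} \log n / \Delta_{e,\min}$; multiplying by $K-1$ and adding the decision-set bound and the additive $c$ yields the theorem.
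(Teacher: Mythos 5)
Your skeleton matches the paper's: the split on the good event, the per-round decomposition of the regret into $\bigl(f(A^\ast,\bar{w})-f(\rnd{D}_t,\bar{w})\bigr) + (K-1)\bigl(f(A^\ast,\bar{w})-f(\rnd{B}_t,\bar{w})\bigr)$, and the standard UCB counting for the decision-set term are exactly what the paper does, and that half of your argument is sound. The gap is in the baseline term. Using a direct bijection $\rnd{\phi}_t: A^\ast \to \rnd{B}_t$ you bound the per-round deficit $\bar{w}(e^\ast)-\bar{w}(\rnd{\phi}_t(e^\ast))$ by $2c_{n,\rnd{T}_{t-1}(e^\ast)}$, and you correctly flag that this radius is useless when $e^\ast \notin \rnd{D}_t$ because $\rnd{T}(e^\ast)$ is frozen. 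But your Type~(ii) repair does not close the hole: charging such a round to the suboptimal $e=\rnd{\pi}_t(e^\ast)\in\rnd{D}_t$ controls only the \emph{number} of such rounds (at most $6\log n/\Delta_{e,\min}^2$ per $e$), not the \emph{per-round cost}, which remains $2c_{n,\rnd{T}_{t-1}(e^\ast)}$ and can stay as large as $2\sqrt{1.5\log n}$ throughout if $e^\ast$ is rarely observed. Count times cost gives $O\bigl((\log n)^{3/2}/\Delta_{e,\min}^2\bigr)$, not the claimed $24\log n/\Delta_{e,\min}$; nothing in your argument relates $\bar{w}(e)-\bar{w}(\rnd{\phi}_t(e^\ast))$ to $c_{n,\rnd{T}_{t-1}(e)}$, because $\rnd{\phi}_t(e^\ast)$ need not be the baseline item that the exchange lemma matches to $e$.

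The missing idea is precisely the paper's \cref{lem:optimal baseline}: route the comparison through the decision set by composing $\rnd{\pi}_t: A^\ast\to\rnd{D}_t$ with $\rnd{\sigma}_t:\rnd{D}_t\to\rnd{B}_t$. Since $\rnd{B}_t$ is the maximum-weight basis for $\rnd{v}_t$, \cref{lem:baseline construction} gives $\rnd{v}_t(\rnd{\sigma}_t(e))\ge\rnd{v}_t(e)$ for every $e\in\rnd{D}_t$; on the good event $\rnd{v}_t(e)\ge\rnd{L}_t(e)\ge\bar{w}(e)-2c_{n,\rnd{T}_{t-1}(e)}$ and $\bar{w}(\rnd{\sigma}_t(e))\ge\rnd{v}_t(\rnd{\sigma}_t(e))$ (for $\cmucbone$, $\rnd{v}_t$ is either the true mean or an LCB), so $\bar{w}(e)-\bar{w}(\rnd{\sigma}_t(e))\le 2c_{n,\rnd{T}_{t-1}(e)}$. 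Adding the decision-set inequality yields $\bar{w}(e^\ast)-\bar{w}(\rnd{\sigma}_t(\rnd{\pi}_t(e^\ast)))\le 4c_{n,\rnd{T}_{t-1}(e)}$ with $e=\rnd{\pi}_t(e^\ast)$ a \emph{decision-set} item, whose counter advances every round, so the telescoping goes through and produces the $24(K-1)\sum_{e\in E\setminus A^\ast}\log n/\Delta_{e,\min}$ term; the case $\rnd{\pi}_t(e^\ast)=e^\ast$ keeps the radius $2c_{n,\rnd{T}_{t-1}(e^\ast)}$ of an item that is in $\rnd{D}_t$ and produces the $12(K-1)\sum_{e^\ast}\log n/\Delta^\ast_{e^\ast,\min}$ term. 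Because $\sum_{e^\ast\in A^\ast}\bar{w}(\rnd{\sigma}_t(\rnd{\pi}_t(e^\ast)))=\sum_{e'\in\rnd{B}_t}\bar{w}(e')$ for any bijection, swapping your $\rnd{\phi}_t$ for this composition costs nothing, and with that substitution your proof becomes essentially the paper's.
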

\begin{proof}
    The standard UCB counting argument does not work because the baseline set is selected using lower confidence bounds (LCBs). Instead, we use the exchangeability property of matroids (\cref{lem:bijectiveexchange}) to match every item in the baseline set with an item in the decision set. Since the baseline set is selected using LCBs, the LCBs of the baseline items must be higher than those of the corresponding decision set items. We use this to bound the regret of the baseline set by the confidence intervals of the decision set items (\cref{lem:optimal baseline}). We then consider two cases depending on whether an item from the decision set is optimal or not. The first case leads to the first term containing the gap $\Delta^\ast_{e^\ast,\min}$, and the second case gives rise to the second term containing the gap $\Delta_{e,\min}$.
\end{proof}

\subsection{$\cmucb2$: Unknown Baseline Means}
\label{sec:analysis 2}

We first prove that $\cmucb2$ is conservative in \cref{thm:conservative 2}. Then we prove a gap-dependent upper bound on its regret in \cref{thm:regret bound cmucb2}.
\begin{restatable}{theorem}{TheoremCmucbTwoAccuracy}
    \label{thm:conservative 2} $\cmucbtwo$ satisfies \eqref{eq:conservative constraint} for $\alpha = 1 / K$ at all time steps $t \in [n]$ with probability of at least $1 - 2 L / (K n)$. 
\end{restatable}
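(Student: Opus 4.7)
The plan is to define a high-probability good event under which the baseline set $\rnd{B}_t$ computed by $\cmucbtwo$ item-wise dominates the input baseline $B_0$, and then leverage this to exhibit, for each action $\rnd{A}_t = \rnd{B}_t \setminus \{e\} \cup \{\rnd{\rho}_t(e)\}$ taken in round $t$, an explicit bijection witnessing \eqref{eq:conservative constraint} with $\alpha = 1/K$. The structure parallels the proof of \cref{thm:conservative 1}; the only new ingredient is that $\cmucbtwo$ substitutes $\rnd{U}_t(e)$ for the (unavailable) mean $\bar{w}(e)$ on items $e \in B_0$.

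Concretely, let $G = \{\forall t \in [n],\ \forall e \in E : \bar{w}(e) \in [\rnd{L}_t(e), \rnd{U}_t(e)]\}$ be the event that every confidence interval is valid throughout the horizon. With $c_{n,s} = \sqrt{1.5 \log n / s}$, each individual two-sided CI fails with probability at most $2/n^3$ by Hoeffding's inequality, so a routine union bound over items and time steps yields $P(G) \geq 1 - 2L/(Kn)$, identical to the analogous calculation in the $\cmucbone$ proof. All of the remaining argument is pointwise on $G$.

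Fix a round $t$ and apply \cref{lem:bijectiveexchange} to the bases $B_0$ and $\rnd{B}_t$, choosing the bijection $\mu_t : B_0 \to \rnd{B}_t$ to act as the identity on $B_0 \cap \rnd{B}_t$ (always feasible: pair common elements with themselves and invoke the exchange lemma on the residual symmetric-difference bases). I claim $\bar{w}(\mu_t(e)) \geq \bar{w}(e)$ for every $e \in B_0$. The case $e \in B_0 \cap \rnd{B}_t$ is trivial. For $e \in B_0 \setminus \rnd{B}_t$ we have $\mu_t(e) \in \rnd{B}_t \setminus B_0$, so by $\cmucbtwo$'s weight-assignment rule $\rnd{v}_t(e) = \rnd{U}_t(e)$ and $\rnd{v}_t(\mu_t(e)) = \rnd{L}_t(\mu_t(e))$. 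The exchange lemma applied to the pair $(\rnd{B}_t, B_0)$ produces the basis $(\rnd{B}_t \setminus \{\mu_t(e)\}) \cup \{e\}$, and maximality of $\rnd{B}_t$ under $\rnd{v}_t$ yields $\rnd{v}_t(\mu_t(e)) \geq \rnd{v}_t(e)$, i.e., $\rnd{L}_t(\mu_t(e)) \geq \rnd{U}_t(e)$. Combined with the CI inclusion on $G$,
\[
\bar{w}(\mu_t(e)) \;\geq\; \rnd{L}_t(\mu_t(e)) \;\geq\; \rnd{U}_t(e) \;\geq\; \bar{w}(e).
\]

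To finish, I would exhibit the bijection demanded by \eqref{eq:conservative constraint}. For $\rnd{A}_t = \rnd{B}_t \setminus \{e\} \cup \{\rnd{\rho}_t(e)\}$, set $\rho_{\rnd{A}_t, B_0}(e') = \mu_t^{-1}(e')$ for each of the $K - 1$ elements $e' \in \rnd{B}_t \setminus \{e\}$, and map the leftover element $\rnd{\rho}_t(e)$ to the remaining element $\mu_t^{-1}(e) \in B_0$. The $K - 1$ indicators corresponding to $\rnd{B}_t \setminus \{e\}$ all evaluate to $1$ by the item-wise dominance above, giving at least $(1 - 1/K)K$ successes in \eqref{eq:conservative constraint}, as required. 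The main subtlety—and the only place the argument could break—is the interplay of UCBs and LCBs in the $\rnd{v}_t$-max-basis inequality: it is essential that $\mu_t$ send $B_0 \setminus \rnd{B}_t$ into $\rnd{B}_t \setminus B_0$, because only then do the two sides of $\rnd{v}_t(\mu_t(e)) \geq \rnd{v}_t(e)$ decode as $\rnd{L}_t$ and $\rnd{U}_t$ respectively, which is precisely what enables the two-sided CI squeeze. Choosing $\mu_t$ to be the identity on $B_0 \cap \rnd{B}_t$ makes this property automatic.
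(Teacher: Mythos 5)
Your proposal is correct and follows essentially the same route as the paper: it invokes the max-weight property of $\rnd{B}_t$ under $\rnd{v}_t$ together with the bijective exchange lemma, splits into the cases $e \in B_0 \cap \rnd{B}_t$ versus $e \in B_0 \setminus \rnd{B}_t$, and uses the squeeze $\bar{w}(b) \geq \rnd{L}_t(b) \geq \rnd{U}_t(\rho(b)) \geq \bar{w}(\rho(b))$ on the good event, with the same union-bound failure probability. The only difference is cosmetic (you orient the bijection from $B_0$ to $\rnd{B}_t$ rather than the reverse), and you are in fact slightly more careful than the paper in justifying that the bijection may be taken to be the identity on $B_0 \cap \rnd{B}_t$.
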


The upper bound on the regret of $\cmucb2$ requires a third kind of gap in addition to those defined in \eqref{eq:Deltaemin} and \eqref{eq:Deltaestarmin}. For items $e' \in B_0$, we define its minimum gap from the closest item $e$ whose mean is higher than that of $e'$ as 
\vspace{-10pt}
\begin{align}
    \textstyle
    \Delta'_{e',\min} = \min_{e \in E \setminus B_0:\bar{w}(e)>\bar{w}(e')} \Delta_{e',e}. \label{eq:Deltaetildemin}
\end{align} 
\vspace{-15pt}
\begin{restatable}[Regret of $\cmucb2$]{theorem}{TheoremRegretCmucbTwo}
    \label{thm:regret bound cmucb2}
    The expected $n$-step regret of $\cmucb2$ is bounded as 
 \begin{align*}
     (K-1)\left(48\hspace{-5pt}\sum_{e^\ast \in A^\ast}\hspace{-5pt} \frac{1}{\Delta^\ast_{e^\ast, \min}}
         +36\hspace{-8pt}\sum_{e \in E\setminus A^\ast}\hspace{-5pt} \frac{1}{\Delta_{e,\min}}
     +48\hspace{-5pt}\sum_{e'\in B_0}\hspace{-5pt} \frac{1}{\Delta'_{e',\min}} 
 \right) \log n   + 24\hspace{-8pt}\sum_{e \in E\setminus A^\ast}\hspace{-5pt} \frac{1}{\Delta_{e,\min}} \log n + c,
 \end{align*}
where 
$\Delta_{e,\min}$, $\Delta^\ast_{e^\ast,\min}$ and $\Delta'_{e',\min}$ are defined in \eqref{eq:Deltaemin}, \eqref{eq:Deltaestarmin}, and \eqref{eq:Deltaetildemin} respectively, and $c = O(KL\sqrt{\log n})$.
\end{restatable}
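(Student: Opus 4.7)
The plan is to adapt the proof of \cref{thm:regret bound cmucb1}, with the key new ingredient being the handling of the UCB-based weights $\rnd{v}_t(e)=\rnd{U}_t(e)$ for items $e\in B_0$. First I would condition on the concentration event
\[
\mathcal{E}=\bigcap_{t\in[n],\,e\in E}\bigl\{\rnd{L}_t(e)\le\bar{w}(e)\le\rnd{U}_t(e)\bigr\},
\]
which by Hoeffding's inequality together with the choice $c_{n,s}=\sqrt{1.5\log n/s}$ and a union bound holds with probability at least $1-O(L/n)$; the complement is absorbed into the additive $c=O(KL\sqrt{\log n})$ term.

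Next I would decompose the per-round expected regret. Since round $t$ consists of the $K$ interleaved actions $\rnd{A}_t=\rnd{B}_t\setminus\{e\}\cup\{\rnd{\rho}_t(e)\}$ for $e\in\rnd{B}_t$, a direct algebraic sum (as in the $\cmucbone$ proof) gives total per-round expected regret equal to
\[
(K-1)\bigl[f(A^\ast,\bar{w})-f(\rnd{B}_t,\bar{w})\bigr]+\bigl[f(A^\ast,\bar{w})-f(\rnd{D}_t,\bar{w})\bigr].
\]
The decision-set term is handled exactly as in the OMM analysis of \cite{kveton2014matroid}: using \cref{lem:bijectiveexchange} to pair $A^\ast$ with $\rnd{D}_t$ via $\rnd{\pi}_t$, bounding each suboptimal mismatch by $2c_{n,\rnd{T}_{t-1}(d)}$, and applying the standard UCB1 counting argument yields the unscaled $24\sum_{e\in E\setminus A^\ast}\log n/\Delta_{e,\min}$ summand.

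The novel work is in the baseline-set regret. I would match $e^\ast\in A^\ast$ with $b=\rnd{\sigma}_t(\rnd{\pi}_t(e^\ast))\in\rnd{B}_t$ via composition of exchange-lemma bijections, and use that $\rnd{B}_t$ maximizes $\rnd{v}_t$-weight to obtain $\rnd{v}_t(b)\ge\rnd{v}_t(d)$ for $d=\rnd{\pi}_t(e^\ast)\in\rnd{D}_t$. A case analysis on membership of $b$ in $B_0$ then produces the three inner summands. When $b\notin B_0$, $\rnd{v}_t(b)=\rnd{L}_t(b)$ and we recover the $\cmucbone$ bound $\bar{w}(d)-\bar{w}(b)\le 2c_{n,\rnd{T}_{t-1}(d)}$, which splits (depending on whether $d\in A^\ast$) into terms scaling as $1/\Delta^\ast_{e^\ast,\min}$ or $1/\Delta_{e,\min}$. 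When $b\in B_0$, however, $\rnd{v}_t(b)=\rnd{U}_t(b)$, and chaining through $\mathcal{E}$ produces an \emph{additional} summand bounded by $2c_{n,\rnd{T}_{t-1}(b)}$ --- the regret is now controlled by the confidence-interval width of the baseline item $b$ itself. A baseline item $b=e'\in B_0$ can remain in $\rnd{B}_t$ only until its UCB drops below the LCB of a strictly-better non-baseline item, so the UCB1 counting argument driven by $\Delta'_{e',\min}$ bounds this contribution by $O(\log n/\Delta'_{e',\min})$ per $e'$, which after summation and the $(K-1)$ scaling yields the new third term inside the $(K-1)$ factor.

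Combining the three per-pair contributions, scaling the baseline regret by $(K-1)$, adding the decision regret, and absorbing $\mathcal{E}^c$ into $c$ gives the stated bound. The main obstacle will be the new $b\in B_0$ case: one must identify the correct gap to charge the baseline item's own confidence-interval width against, namely $\Delta'_{e',\min}$ from \eqref{eq:Deltaetildemin}, which requires verifying that any $b=e'\in B_0$ retained in $\rnd{B}_t$ is ``held up'' by some item $e\notin B_0$ with $\bar{w}(e)-\bar{w}(e')\ge\Delta'_{e',\min}$ so that this gap governs the elimination time. A secondary care is tracking per-round pull counts --- each $b\in\rnd{B}_t$ appears in $K-1$ of the $K$ interleaved actions while each $d\in\rnd{D}_t$ appears in just one --- which is what produces the asymmetric $(K-1)$ scaling between the baseline and decision summands.
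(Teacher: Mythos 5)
Your proposal follows essentially the same route as the paper: the same failure-event and per-round regret decompositions, the same exchange-lemma pairings $\rnd{\pi}_t,\rnd{\sigma}_t$, and the same identification of the new $b\in B_0$ case whose extra confidence-width term is charged against $\Delta'_{e',\min}$ (this is exactly the content of \cref{lem:cmucb2}). The only detail you leave implicit is the sub-case split on whether $c_{n,\rnd{T}_{t-1}(e')}\le c_{n,\rnd{T}_{t-1}(e)}$, which the paper uses to decide whether the inequality $2c_{n,\rnd{T}_{t-1}(e)}+2c_{n,\rnd{T}_{t-1}(e')}\ge\bar{w}(e)-\bar{w}(e')$ bounds the pull count of the decision item or of the baseline item; your ``elimination time'' argument is the informal version of precisely that step.
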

\begin{proof}
The first two terms in the regret upper bound arise similarly to \cref{thm:regret bound cmucb1}. The additional complexity in the analysis of $\cmucb2$ stems from the fact that items in the initial baseline set $B_0$ are selected in $\rnd{B}_t$ using their UCBs, while other items are selected using their LCBs. Because of this, the regret due to items in $\rnd{B}_t \cap B_0$ is bounded using the sum of the confidence intervals of items in $\rnd{B}_t \cap B_0$ and those of the corresponding items in $\rnd{D}_t$ (\cref{lem:cmucb2}). We then consider two cases depending on whether the confidence intervals of the items in $B_0$ are smaller or larger than those of their corresponding decision set items. The latter case gives rise to the third gap term $\Delta'_{e',\min}$.
\end{proof}

\vspace{-5pt}
\subsection{Discussion}
\label{sec:discussion}
\vspace{-5pt}

We note three points. First, the regret bound of $\cmucb$ contains an extra $(K-1)$ factor as compared to the bound of non-conservative matroid bandit algorithms \citep{kveton2014matroid, talebi2016optimal}. This is because $\cmucb$ explores a new action in $K$ steps that non-conservative algorithms can explore in a single step. Note that we set $\alpha=1/K$ in our conservative constraint \eqref{eq:conservative constraint}. If the action space allows exchanging multiple items in \cref{eq:exchangeability definition}, our algorithm can be generalized to any $\alpha=m/K$ for $m \in [K]$ by interleaving multiple items simultaneously in lines $23$-$26$. It is clear from our proofs that the regret bound of this algorithm for general $\alpha$ will contain an extra factor of $K(1-\alpha)$. \emph{This is the price we pay for conservativism.} As $\alpha$ approaches $1$, this extra factor disappears and our regret upper bound matches existing regret bounds of non-conservative matroid algorithms \citep{kveton2014matroid, talebi2016optimal}.

Second, by using the standard technique of decomposing the gaps into those that are larger than $\varepsilon$ and smaller than $\varepsilon$, one can show that the gap-free regret bound is $O(K\sqrt{KLn\log n})$. This again is $K$ times the gap-free regret of non-conservative matroid algorithms \citep{kveton2014matroid}.

Finally, the regret of $\cmucb1$ contains two gaps $\Delta^\ast_{e^\ast,\min}$ and $\Delta_{e,\min}$, while the regret of $\cmucb2$ contains an additional gap $\Delta'_{e',\min}$ that is defined for items $e' \in B_0$. The gap $\Delta_{e,\min}$ also appears in the regret of non-conservative matroid algorithms \citep{kveton2014matroid}. The gap $\Delta^\ast_{e^\ast,\min}$ measures the distance of every optimal item to the closest suboptimal item, and is similar to that appearing in top-$K$ best arm identification problems \citep{kalyanakrishnan2012pac}. We believe the $\Delta'_{e',\min}$ gap in the $\cmucb2$ regret bound is not necessary and our analysis can be improved; however note that it only appears for items in $B_0$, which contains $K$ items, and hence its contribution is small. It also doesn't affect the gap-free bound.


\vspace{-7pt}
\section{Experiments}
\label{sec:experiments}
\vspace{-5pt}
We conduct two experiments. In \cref{sec:regret scaling}, we validate that the
regret of $\cmucb$ grows as per our upper bounds in
\cref{sec:analysis}. In \cref{sec:recommender system experiment}, we solve two
recommendation problems using $\cmucb$, and validate that its regret is no higher than $K-1$ times that of a non-conservative matroid bandit algorithm $\omm$ \citep{kveton2014matroid}. $\omm$ violates our conservative constraint multiple times.

\begin{figure*}[t]
  \label{fig:experiments}
  \centering
  \includegraphics[width=1.8in]{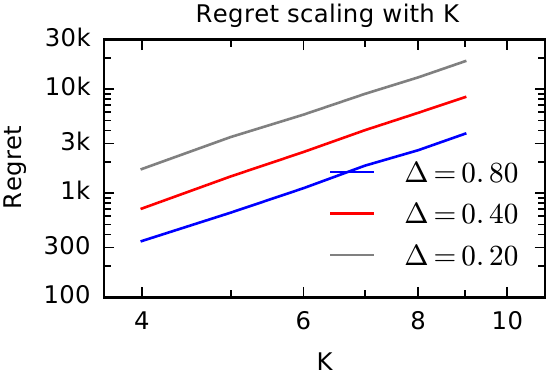}
  \includegraphics[width=1.8in]{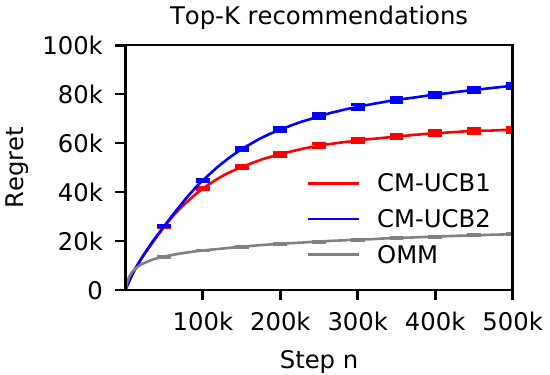}
  \includegraphics[width=1.8in]{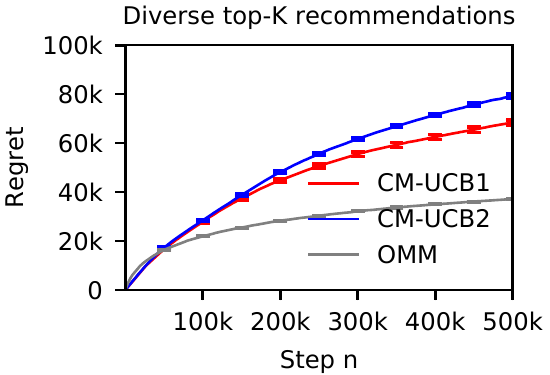} \\
  \hspace{0.35in} (a) \hspace{1.6in} (b) \hspace{1.75in} (c) \vspace{-0.07in}
  \caption{\textbf{a}. The $n$-step regret of $\cmucbone$ in the synthetic problem in \cref{sec:regret scaling} as a function of $K$. \textbf{b}. The regret of $\cmucbone$, $\cmucbtwo$, and $\omm$ in the top-$K$ recommendation problem in \cref{sec:recommender system experiment}. \textbf{c}. The regret of $\cmucbone$, $\cmucbtwo$, and $\omm$ in the diverse top-$K$ recommendation problem in \cref{sec:recommender system experiment}.}
  \vspace{-12pt}
\end{figure*}

  \vspace{-7pt}
\subsection{Regret Scaling}
\label{sec:regret scaling}
  \vspace{-5pt}

The first experiment shows that the regret of $\cmucbone$ grows as suggested by our gap-dependent upper bound in \cref{thm:regret bound cmucb1}. We experiment with uniform matroids of rank $K$ where the ground set is $E = [K^2]$. The $i$-th entry of $\rnd{w}_t$, $\rnd{w}_t(i)$, is an independent Bernoulli variable with mean $\bar{w}(i) = 0.5 (1 - \Delta \1{}{i > K})$ for $\Delta \in (0, 1)$. The baseline set is the last $K$ items in $E$, $B_0 = [K^2] \setminus [K (K - 1)]$. The key property of our class of problems is that the regret of any item in $B_0$ is the same as that of any suboptimal item, and therefore the regret of $\cmucbone$ should be dominated by the gap-dependent term in \cref{thm:regret bound cmucb1}. This term is $O(K^3)$ because $L = K^2$. We vary $K$ and report the $n$-step regret in $100\text{k}$ steps for multiple values of $\Delta$.

\cref{fig:experiments}a shows log-log plots of the regret of $\cmucbone$ as a function of $K$ for three values of $\Delta$. The slopes of the plots are $2.99$ ($\Delta = 0.8$), $2.98$ ($\Delta = 0.4$), and $2.99$ ($\Delta = 0.2$). This means that the regret is cubic in $K$, as suggested by our upper bound.

  \vspace{-7pt}
\subsection{Recommender System Experiment}
\label{sec:recommender system experiment}
  \vspace{-5pt}
  In the second experiment, we apply $\cmucb$ to the two recommendation problems discussed in \cref{sec:conmatbandit}. In each problem, we recommend $K$ most attractive movies out of $L$ subject to a different matroid constraint. We experiment with the \emph{MovieLens} dataset from February 2003 \cite{movielens}, where $6$ thousand users give one million ratings to $4$ thousand movies.

Our learning problems are formulated as follows. The set $E$ are $200$ movies from the MovieLens dataset. The set is partitioned as $E = \bigcup_{i = 1}^{10} E_i$, where $E_i$ are $20$ most popular movies in the $i$-th most popular MovieLens movie genre that are not in $E_1, \dots, E_{i - 1}$. The weight of item $e$ at time $t$, $\rnd{w}_t(e)$, indicates that item $e$ attracts the user at time $t$. We assume that $\rnd{w}_t(e) = 1$ if and only if the user rated item $e$ in our dataset. This indicates that the user watched movie $e$ at some point in time, perhaps because the movie was attractive. The user at time $t$ is drawn randomly from all MovieLens users. The goal of the learning agent is to learn a list of items with the highest expected number of attractive movies on average, subject to a constraint.


We experiment with two constraints. The first problem is a uniform matroid of rank $K = 10$. The optimal solution is the set of $K$ most attractive movies. This setting is also known as \emph{top-$K$ recommendations}. The baseline set $B_0$ are the $11$-th to $20$-th most attractive movies. The second problem is a partition matroid of rank $K = 10$, where the partition is $\{E_i\}_{i = 1}^{10}$. The optimal solution are most attractive movies in each $E_i$. This setting can be viewed as \emph{diverse top-$K$ recommendations}. The baseline set $B_0$ are second most attractive movies in each $E_i$. 

Our results are reported in Figures \ref{fig:experiments}b and \ref{fig:experiments}c. We observe several trends. First, the regret of all algorithms flattens over time, which shows that they learn near-optimal solutions. Second, the regret of $\cmucbtwo$ is higher than that of $\cmucbone$. This is because $\cmucbtwo$ is a variant of $\cmucbone$ that does not know the values of suboptimal items, and therefore needs to estimate them. Both of our algorithms satisfy our conservative constraint in \eqref{eq:conservative constraint} at each time $t$. Third, we observe that $\omm$ achieves the lowest regret. But it also violates our conservative constraints. In Figures \ref{fig:experiments}b and \ref{fig:experiments}c, the numbers of violated constraints are more than $16$ and $158$ thousand, respectively. In the latter problem, this is one violated constraint in every three actions on average. Finally, note that the regret of $\cmucb1$ and $\cmucb2$ is less than $(K-1)$ times ($K=10$) the regret of $\omm$, as predicted by our regret bounds.

\vspace{-8pt}
\section{Related Work}
\label{sec:relatedwork}
\vspace{-8pt}
Online learning with matroids was introduced by
\citet{kveton2014matroid}, and also studied by \citet{talebi2016optimal}. However, they do not consider any notion of
conservatism. Our $\cmucb$ algorithm borrows ideas and the {\sc MaxBasis} method from
their algorithm.

Conservatism in online learning was introduced by \citet{wu2016conservative}.
They consider the standard multi-armed bandit problem with no structural
assumption about their actions. 
Their constraint is cumulative, and this allows the learner to take bad actions once in a while, but our instantaneous constraint \eqref{eq:conservative constraint} explicitly forbids this by design. However, note that our setting and algorithm applies to combinatorial action spaces, and hence is less general. 

\citet{kazerouni2017conservative} study conservatism in linear bandits. Their constraint is also cumulative; furthermore
the time complexity of their algorithm grows with time when the rewards of
the basline policy are unknown. $\cmucb$ is efficient because it exploits the matroid structure of the action space.

\citet{bastani2017exploiting} study contextual bandits and propose diversity
assumptions on the environment. 
Intuitively, if contexts vary a lot over time, the environment explores on your
behalf and you need not explore. In our setting, the learner actively explores, albeit in a constrained fashion.

\citet{radlinski2006minimally} propose randomizing the order of presented items
to estimate their true relevance in the presence of item and position biases. While their
algorithm guarantees that the quality of the presented items is unaffected, it
does not learn a better policy. The idea of interleaving has been used to evaluate information retrieval systems and \citet{chapelle2012large} validate its efficacy, but they too do not learn a better policy. Our algorithm learns a better policy, as seen in
our regret plots. While we do not consider item and position biases in this work, we
hope to do so in the future work.


\vspace{-7pt}
\section{Conclusions}
\label{sec:conclusions}
\vspace{-6pt}
In this paper, we study controlled exploration in combinatorial action spaces using interleaving, and precisely formulate the learning problem in the action space of matroids. Our conservate formulation is more suitable for combinatorial spaces than existing notions of conservatism. We propose an algorithm for solving our problem, $\cmucb$, and prove gap-dependent upper bounds on its regret. $\cmucb$ exploits the idea of interleaving, and hence can evaluate an action without ever taking that action.

We leave open several questions of interest. First, we only study the case of $\alpha = 1 / K$. Our algorithm generalizes to higher values of $\alpha$ in uniform and partition matroids, because they satisfy the property that $\forall\,B_1,B_2\in \mathcal{B}$, there exists a bijection $\sigma_{B_1,B_2}: B_1 \to B_2$ such that $(B_1 \setminus X) \cup \sigma_{B_1,B_2}(X) \in \mathcal{B}$ $\forall X \subseteq B_1$. Matroids that satisfy this property are called \emph{strongly base-orderable}, and one can generalize $\cmucb$ and its analysis to these matroids for higher values of $\alpha$ (see \cref{sec:discussion}). It is not clear how to extend our results beyond $\alpha = 1 / K$ when the matroid is not strongly base-orderable.

Second, we exploit the modularity of our reward function. In general, it may not be possible to build unbiased estimators with interleaving. For e.g., clicks are known to be position-biased, and click models that take this into account have non-linear reward functions \cite{chuklin2015click}. But it may be possible to build biased estimators with the right bias, such that a more attractive item never appears to be less attractive than a less attractive item \cite{zoghi2017online}.

Third, \cref{lem:bijectiveexchange} only guarantees the \emph{existence} of a bijection, but it is not constructive. The construction is straightforward for uniform and partition matroids in our experiments. Fourth, we also leave open the question of a lower bound. Finally, note that our new analysis based on \cref{lem:bijectiveexchange} significantly simplifies the original analysis of OMM in \citet{kveton2014matroid}.

\newpage
\bibliographystyle{plainnat}
\bibliography{References}

\clearpage
\appendix
\section{Appendix}
\label{sec:appendix}
We define a ``good'' event
\begin{align}
  \mathcal{E}_t =\{\forall \, e \in E: |\bar{w}(e) - \hat{\rnd{w}}_{\rnd{T}_{t-1}(e)}(e)| \leq c_{n,\rnd{T}_{t-1}(e) }\}\,,
  \label{eq:goodevent} 
\end{align}
which states that $\bar{w}(e)$ is inside the high-probability confidence interval around $\hat{\rnd{w}}_{\rnd{T}_{t-1}(e)}(e)$ for all items $e$ at the beginning of time $t$.

\begin{lemma}
\label{lem:failureevent} Let $\mathcal{E}_r$ be the good event in \eqref{eq:goodevent}. Then
\begin{align*}
  \mathbb{P}\left(\bigcup_{r = 1}^{n / K} \bar{\mathcal{E}}_r\right) \leq
  \sum_{r = 1}^{n / K} \E{}{\1{}{\bar{\mathcal{E}}_r}} \leq
  \frac{2 L}{K n}\,.
\end{align*}
\end{lemma}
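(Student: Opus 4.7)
The first inequality is just the union bound combined with $\mathbb{P}(A) = \mathbb{E}[\1{}{A}]$; the real work is in the second inequality. My plan is a nested union bound. For a fixed round $r$, the event $\bar{\mathcal{E}}_r$ is the union over items $e \in E$ of the events $\{|\bar{w}(e) - \hat{\rnd{w}}_{\rnd{T}_{r-1}(e)}(e)| > c_{n, \rnd{T}_{r-1}(e)}\}$. Since $\cmucb$ performs one initial observation of each item and then $K$ actions per round for at most $n/K$ rounds, the count $\rnd{T}_{r-1}(e)$ is a random integer that takes values in $\{1, 2, \ldots, n\}$. I will union-bound over its possible values $s$ to replace the random-index event by $\bigcup_{s=1}^{n} \{|\bar{w}(e) - \hat{\rnd{w}}_s(e)| > c_{n,s}\}$, where $\hat{\rnd{w}}_s(e)$ is now the empirical mean of a deterministic prefix of $s$ i.i.d.\ samples of coordinate $e$.

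For each fixed pair $(e, s)$, the samples are i.i.d.\ from a distribution supported in $[0,1]$, so Hoeffding's inequality yields
\begin{align*}
\mathbb{P}\bigl(|\bar{w}(e) - \hat{\rnd{w}}_s(e)| > c_{n,s}\bigr) \leq 2\exp\bigl(-2 s\, c_{n,s}^{2}\bigr) = 2\exp(-3 \log n) = \frac{2}{n^{3}},
\end{align*}
using the definition $c_{n,s}^2 = 1.5 \log(n)/s$ from \eqref{eq:ucb1}; the constant $1.5$ is chosen precisely so that this per-pair tail is $O(n^{-3})$. Summing over $s \in \{1,\ldots,n\}$ costs a factor $n$ and summing over $e \in E$ costs a factor $L$, giving $\mathbb{P}(\bar{\mathcal{E}}_r) \leq L \cdot n \cdot (2/n^{3}) = 2L/n^{2}$. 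The first inequality in the lemma (the outer union bound over rounds) then contributes a further factor of $n/K$, yielding $(n/K) \cdot (2L/n^{2}) = 2L/(Kn)$, as required.

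The only mildly delicate point is the replacement of the random count $\rnd{T}_{r-1}(e)$ by a deterministic $s$ before invoking Hoeffding, since $\rnd{T}_{r-1}(e)$ itself depends on the data and one cannot apply the i.i.d.\ tail bound to an empirical mean with a data-dependent sample size; this is the standard union-bound-over-the-count trick. Everything else is arithmetic, and the three factors in the final bound correspond to the three union bounds: $L$ over items, $n$ over possible counts (absorbed into the $n^{-3}$ Hoeffding tail), and $n/K$ over rounds.
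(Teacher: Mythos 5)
Your proof is correct and follows essentially the same route as the paper's: a union bound over rounds, items, and the possible values of the random counter $\rnd{T}_{r-1}(e)$, followed by Hoeffding's inequality with the constant $1.5$ giving a per-term tail of $2n^{-3}$. The only cosmetic difference is that you union over $s \in [n]$ uniformly in each round while the paper uses the tighter range $s \in [rK]$; both sum to the stated $2L/(Kn)$.
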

\begin{proof}
From the definition of our confidence intervals and Hoeffding's inequality \cite{boucheron2013concentration},
\begin{align*}
  \mathbb{P}({|\bar{w}(e) - \hat{\rnd{w}}_s(e)| \geq c_{t,s}}) \leq 2 \exp[-3 \log t]
\end{align*}
for any $e \in E$, $s \in [n]$, and $t \in [n]$. Therefore,
\begin{align*}
  \mathbb{P}\left(\bigcup_{r = 1}^{n / K} \bar{\mathcal{E}}_r\right)
  & \leq \sum_{r = 1}^{n / K} \mathbb{P}(\bar{\mathcal{E}}_r) \\
  & \leq \sum_{r = 1}^{n / K} \sum_{e \in E} \sum_{s = 1}^{r K} \mathbb{P}(|\bar{w}(e) - \hat{\rnd{w}}_s(e)| \geq c_{n, s}) \\ 
  & \leq 2 \sum_{e \in E} \frac{1}{K n}\,.
\end{align*}
This concludes our proof.
\end{proof}

\begin{restatable}{lemma}{LemBaselineConstruction}
\label{lem:baseline construction} Let $A$ be the maximum weight basis with respect to weights $w$. Let $B$ be any basis and let $\rho: A \to B$ be the bijection in \cref{lem:bijectiveexchange}. Then
\begin{align*}
  \forall a \in A: w(a) \geq w(\rho(a))\,.
\end{align*}
\end{restatable}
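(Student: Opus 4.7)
The plan is to apply the Bijective Exchange Lemma (\cref{lem:bijectiveexchange}) directly and then invoke the maximality of $A$ one swap at a time.

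Fix any $a \in A$ and let $b = \rho(a) \in B$. By \cref{lem:bijectiveexchange}, the set $A' = (A \setminus \{a\}) \cup \{b\}$ is a basis. I will consider two cases depending on whether $b \in A$ or not. If $b \notin A$, then $|A'| = K$ and $A'$ is a genuine single-element swap of $A$; if $b \in A$, then since $A'$ is a basis of rank $K$ and $A' \subseteq A$, we must actually have $b = a$, and the inequality $w(a) \geq w(\rho(a))$ is trivially an equality. (In fact, standard proofs of the bijective exchange lemma show that $\rho$ fixes $A \cap B$ pointwise, but I do not need to invoke this stronger statement.)

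In the nontrivial case $b \neq a$, I use that $A$ is the maximum weight basis with respect to $w$, so $w(A) \geq w(A')$. Expanding both sides using modularity of $w$,
\begin{align*}
\sum_{e \in A} w(e) \;\geq\; \sum_{e \in A \setminus \{a\}} w(e) + w(b),
\end{align*}
which after cancellation gives exactly $w(a) \geq w(b) = w(\rho(a))$. Since $a \in A$ was arbitrary, the conclusion follows.

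The only subtlety is the handling of the case $\rho(a) \in A$, which I resolve by a cardinality argument: if $\rho(a) \in A \setminus \{a\}$, then $(A \setminus \{a\}) \cup \{\rho(a)\}$ has only $K-1$ elements and cannot be a basis, contradicting \cref{lem:bijectiveexchange}; hence $\rho(a) = a$. Beyond this bookkeeping, the argument is a one-line exchange inequality, and I do not expect any technical obstacle.
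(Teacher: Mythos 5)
Your proof is correct and follows essentially the same route as the paper's: apply \cref{lem:bijectiveexchange} to get that $(A \setminus \{a\}) \cup \{\rho(a)\}$ is a basis, then use maximality of $A$ and modularity of $w$ to cancel terms and obtain $w(a) \geq w(\rho(a))$. Your extra bookkeeping for the case $\rho(a) \in A$ is a small but valid refinement that the paper's one-line computation implicitly skips over.
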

\begin{proof}
    Fix $a \in A$ and let $b = \rho(a)$. By \cref{lem:bijectiveexchange}, $A^a_b = A \setminus \{a\} \cup \{b\} \in \mathcal{B}$. Now note that $A$ is the maximum weight basis with respect to $w$. Therefore,
\begin{align*}
  w(a) - w(b) =  \sum_{e \in A} w(e) - \sum_{e \in A^a_b} w(e) \geq  0\,.
\end{align*}
This concludes our proof.
\end{proof}

\TheoremCmucbOneAccuracy*
\begin{proof}
    At time $t$, the baseline set $\rnd{B}_t$ is the maximum weight basis with respect to $\rnd{v}_t$. Therefore, by \cref{lem:baseline construction}, there exists a bijection $\rnd{\rho}: \rnd{B}_t \rightarrow B_0$ such that 
\begin{align*}
    \forall b \in \rnd{B}_t: \rnd{v}_t(b) \geq \rnd{v}_t(\rnd{\rho}(b))\,.
\end{align*}
From the definition of $\rnd{v}_t$, $\rnd{v}_t(\rnd{\rho}(b)) = \bar{w}(\rnd{\rho}(b))$ for any $b \in \rnd{B}_t$, and thus
\begin{align*}
    \forall b \in \rnd{B}_t: \rnd{v}_t(b) \geq \bar{w}(\rnd{\rho}(b))\,.
\end{align*}
Now suppose that event $\mathcal{E}_t$ in \eqref{eq:goodevent} happens. Then $\bar{w}(e) \geq \rnd{L}_{t}(e)$ for any $e \in E$, and it follows that
\begin{align*}
    \forall b \in \rnd{B}_t: \bar{w}(b) \geq \bar{w}(\rnd{\rho}(b))\,.
\end{align*}
Since any action at time $t$ contains $K - 1$ items from $\rnd{B}_t$, the constraint in \eqref{eq:conservative constraint} is satisfied when event $\mathcal{E}_t$ happens.

Finally, we prove that $ \mathbb{P}(\cup_t \bar{\mathcal{E}}_t) \leq 2 L / (K n)$ in \cref{lem:failureevent}. Therefore, $\mathbb{P}(\mathcal{E}_t) \geq \mathbb{P}(\cap_t \mathcal{E}_t) \geq 1 -  2 L / (K n)$. This concludes our proof.
\end{proof}

\begin{restatable}{lemma}{LemOptimalDecision}
    For any $e,e^\ast$, if $e \in \rnd{D}_t$ and $e = \rnd{\pi}_t(e^\ast)$, we have that
\begin{align}
    2c_{n,\rnd{T}_{t-1}(e)} \ge \bar{w}(e^\ast) - \bar{w}(e),\qquad \text{and}\qquad
    \rnd{T}_{t-1}(e) \le \frac{6 \log n}{\Delta_{e,e^\ast}^2} \le \frac{6 \log n}{\Delta_{e,\min}^2}, 
    \label{eq:optimal decision}
\end{align}
where $\Delta_{e,\min}$ is defined in \eqref{eq:Deltaemin}. 
    \label{lem:optimal decision}
\end{restatable}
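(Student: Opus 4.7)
The plan is to combine \cref{lem:baseline construction} (applied to the decision set rather than the baseline set) with the definition of the UCB confidence intervals. Note that $\rnd{D}_t = \text{\sc MaxBasis}((E,\mathcal{B}), \rnd{U}_t)$ is the maximum weight basis with respect to $\rnd{U}_t$, so applying \cref{lem:baseline construction} with $A = \rnd{D}_t$, $w = \rnd{U}_t$, and $B = A^\ast$ yields a bijection $\rho: \rnd{D}_t \to A^\ast$ satisfying $\rnd{U}_t(d) \ge \rnd{U}_t(\rho(d))$ for every $d \in \rnd{D}_t$. Since \cref{lem:bijectiveexchange} only asserts \emph{existence}, we can take $\rnd{\pi}_t := \rho^{-1}: A^\ast \to \rnd{D}_t$, so that $e = \rnd{\pi}_t(e^\ast)$ gives $\rho(e) = e^\ast$ and hence $\rnd{U}_t(e) \ge \rnd{U}_t(e^\ast)$.

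Next, I would work on the good event $\mathcal{E}_t$ from \eqref{eq:goodevent} (as is implicit in the lemma's use in the regret proof). On $\mathcal{E}_t$ we have the two one-sided bounds
\begin{align*}
  \rnd{U}_t(e^\ast) \;\ge\; \bar{w}(e^\ast), \qquad
  \rnd{U}_t(e) \;=\; \hat{\rnd{w}}_{\rnd{T}_{t-1}(e)}(e) + c_{n,\rnd{T}_{t-1}(e)} \;\le\; \bar{w}(e) + 2 c_{n,\rnd{T}_{t-1}(e)}.
\end{align*}
Chaining these with $\rnd{U}_t(e) \ge \rnd{U}_t(e^\ast)$ gives
\begin{align*}
  \bar{w}(e^\ast) \;\le\; \rnd{U}_t(e^\ast) \;\le\; \rnd{U}_t(e) \;\le\; \bar{w}(e) + 2 c_{n,\rnd{T}_{t-1}(e)},
\end{align*}
which is the first claim $2 c_{n,\rnd{T}_{t-1}(e)} \ge \bar{w}(e^\ast) - \bar{w}(e) = \Delta_{e,e^\ast}$.

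For the second claim, I would plug in the explicit form $c_{n,\rnd{T}_{t-1}(e)} = \sqrt{1.5 \log(n)/\rnd{T}_{t-1}(e)}$ from \eqref{eq:ucb1}, square both sides of $2 c_{n,\rnd{T}_{t-1}(e)} \ge \Delta_{e,e^\ast}$, and rearrange to obtain $\rnd{T}_{t-1}(e) \le 6\log(n)/\Delta_{e,e^\ast}^2$. The final inequality $\Delta_{e,e^\ast}^2 \ge \Delta_{e,\min}^2$ is immediate from the definition of $\Delta_{e,\min}$ in \eqref{eq:Deltaemin} as a minimum over optimal items $e^\ast$ with $\bar{w}(e^\ast) > \bar{w}(e)$.

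The only real subtlety is matching up bijection directions: \cref{lem:baseline construction} is stated with the domain being the max-weight basis, which is $\rnd{D}_t$ here, whereas the bijection $\rnd{\pi}_t$ in the analysis section runs $A^\ast \to \rnd{D}_t$. I resolve this by taking the inverse, which is legitimate because \cref{lem:bijectiveexchange} only guarantees the existence of \emph{some} bijection and we are free to select the one provided by \cref{lem:baseline construction}. Beyond that, the argument is purely algebraic; the proof has no serious obstacle once the good event $\mathcal{E}_t$ and the correct bijection are set up.
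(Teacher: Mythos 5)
Your proposal is correct and follows essentially the same route as the paper: the paper's proof likewise starts from $\rnd{U}_t(e) \ge \rnd{U}_t(e^\ast)$ (justified, as you spell out more explicitly, by \cref{lem:baseline construction} applied to $\rnd{D}_t$ as the max-weight basis under $\rnd{U}_t$), chains the confidence-interval bounds under the good event $\mathcal{E}_t$ to get $\bar{w}(e) + 2c_{n,\rnd{T}_{t-1}(e)} \ge \bar{w}(e^\ast)$, and then substitutes \eqref{eq:ucb1} to bound $\rnd{T}_{t-1}(e)$. Your extra care about the bijection direction and the implicit conditioning on $\mathcal{E}_t$ only makes explicit what the paper leaves tacit.
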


\begin{proof}
Since the decision set $\rnd{D}_t$ is chosen using upper confidence bounds, we have that $\rnd{U}_t(e) \ge \rnd{U}_t(e^\ast)$. This gives us:
$$\bar{w}(e) + 2c_{n,\rnd{T}_{t-1}(e)} \ge \hat{\rnd{w}}_{t-1}(e) + c_{n,\rnd{T}_{t-1}(e)} = \rnd{U}_t(e) \ge \rnd{U}_t(e^\ast) \ge \bar{w}(e^\ast).$$
This implies the first inequality in \eqref{eq:optimal decision}. Substituting the expression for $c_{n,\rnd{T}_{t-1}(e)}$ from \eqref{eq:ucb1} yields the bound on $\rnd{T}_{t-1}(e)$ in \eqref{eq:optimal decision}.
\end{proof}

\begin{restatable}{lemma}{LemOptimalBaseline}
    For any $e^\ast \in A^\ast$, $e \in \rnd{D}_t$, and $e' \in \rnd{B}_t$ such that $e = \rnd{\pi}_t(e^\ast)$ and $e'=\rnd{\sigma}_t(e)$,
    \begin{itemize}
        \item[(a)] If $e \in A^\ast$, then $e=e^\ast$ and 
\begin{align}
    2c_{n,\rnd{T}_{t-1}(e^\ast)} \ge \bar{w}(e^\ast)-\bar{w}(e'), \qquad \text{and}\qquad 
    \rnd{T}_{t-1}(e^\ast) \le \frac{6 \log n}{\Delta_{e',e^\ast}^2} \le \frac{6 \log n}{\Delta_{e^\ast,\min}^{^\ast 2}}, 
    \label{eq:optimal baseline e=e*}
\end{align}
where $\Delta^\ast_{e^\ast, \min}$ is defined in \eqref{eq:Deltaestarmin}.
\item[(b)] If $e \notin A^\ast$,  
\begin{align}
    4c_{n,\rnd{T}_{t-1}(e)} \ge \bar{w}(e^\ast) - \bar{w}(e').
    \label{eq:optimal baseline ci}
\end{align}
    \end{itemize}
    \label{lem:optimal baseline}
\end{restatable}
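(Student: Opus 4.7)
The plan is to combine three ingredients: \cref{lem:baseline construction} applied to $\rnd{B}_t$ (the maximum-weight basis with respect to $\rnd{v}_t$), concentration of $\rnd{L}_t$ around $\bar{w}$ under the good event $\mathcal{E}_t$, and \cref{lem:optimal decision} in order to transfer bounds involving $e$ into bounds involving $e^\ast$ in part (b).

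First I would isolate a common estimate used in both parts. Invoking \cref{lem:baseline construction} with $A=\rnd{B}_t$, $B=\rnd{D}_t$, and the exchange bijection $\rnd{\sigma}_t^{-1}$ (chosen compatibly so that the inverse is also an exchange bijection), we get $\rnd{v}_t(e') \ge \rnd{v}_t(e)$. Under the good event $\mathcal{E}_t$, the definition of $\rnd{v}_t$ in $\cmucbone$ gives $\rnd{v}_t(e') \le \bar{w}(e')$ (since $\rnd{v}_t = \bar{w}$ on $B_0$ and $\rnd{v}_t = \rnd{L}_t \le \bar{w}$ off $B_0$) and $\rnd{v}_t(e) \ge \bar{w}(e) - 2c_{n,\rnd{T}_{t-1}(e)}$ (trivially on $B_0$, and from $\rnd{L}_t(e) \ge \bar{w}(e) - 2c_{n,\rnd{T}_{t-1}(e)}$ off $B_0$). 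Chaining the three inequalities yields the key estimate $\bar{w}(e) - \bar{w}(e') \le 2c_{n,\rnd{T}_{t-1}(e)}$.

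For part (a), assume $e \in A^\ast$. Choosing the exchange bijection $\rnd{\pi}_t$ so that it fixes common elements of $A^\ast \cap \rnd{D}_t$, the hypothesis $e \in A^\ast$ together with $\rnd{\pi}_t(e^\ast) = e$ forces $e = e^\ast$. The key estimate then specialises to $2c_{n,\rnd{T}_{t-1}(e^\ast)} \ge \bar{w}(e^\ast) - \bar{w}(e')$, and inverting $c_{n,s} = \sqrt{1.5 \log n / s}$ gives $\rnd{T}_{t-1}(e^\ast) \le 6\log n / \Delta_{e',e^\ast}^2$. The final inequality $\Delta_{e',e^\ast}^2 \ge (\Delta^{\ast}_{e^\ast,\min})^2$ follows directly from the definition of $\Delta^\ast_{e^\ast,\min}$ whenever $e' \in E \setminus A^\ast$; the remaining case $e' \in A^\ast$ reduces, via the same fixed-point convention applied to $\rnd{\sigma}_t$ on $A^\ast \cap \rnd{B}_t$, to $e' = e^\ast$, which makes the bound vacuous. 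For part (b), $e \notin A^\ast$, and \cref{lem:optimal decision} supplies $2c_{n,\rnd{T}_{t-1}(e)} \ge \bar{w}(e^\ast) - \bar{w}(e)$. Adding this to the key estimate above telescopes out $\bar{w}(e)$ and yields $\bar{w}(e^\ast) - \bar{w}(e') \le 4c_{n,\rnd{T}_{t-1}(e)}$, which is exactly \eqref{eq:optimal baseline ci}.

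The main obstacle will be the bookkeeping around the two bijections $\rnd{\pi}_t$ and $\rnd{\sigma}_t$: one must choose them to fix common elements of the intersecting bases and to satisfy the exchange property in both directions, so that \cref{lem:baseline construction} can be applied along $\rnd{B}_t \to \rnd{D}_t$ to produce $\rnd{v}_t(e') \ge \rnd{v}_t(e)$, and so that forcing $e = e^\ast$ in part (a) is justified. Once this matroid-theoretic point is settled, the rest is a clean chain of inequalities under $\mathcal{E}_t$ together with an algebraic inversion of $c_{n,s}$.
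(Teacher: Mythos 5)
Your proof takes essentially the same route as the paper's: the key estimate $\bar{w}(e) - \bar{w}(e') \le 2c_{n,\rnd{T}_{t-1}(e)}$ obtained from the maximum-weight property of $\rnd{B}_t$ under the good event, the cardinality/fixed-point argument forcing $e = e^\ast$ in part (a), and adding the inequality of \cref{lem:optimal decision} to the key estimate in part (b). Your treatment of $\rnd{v}_t$ on $B_0$ versus off $B_0$ is a bit more explicit than the paper's (which simply asserts $\rnd{L}_t(e') \ge \rnd{L}_t(e)$), but the argument is the same.
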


\begin{proof}
Since the baseline set is selected using lower confidence bounds, we have that $\rnd{L}_t(e') \ge \rnd{L}_t(e)$. This gives us:
$$\bar{w}(e') \ge \rnd{L}_t(e') \ge \rnd{L}_t(e) \ge \bar{w}(e)-2c_{n,\rnd{T}_{t-1}(e)}$$
This implies that 
\begin{align}
    2c_{n,\rnd{T}_{t-1}(e)} \ge \bar{w}(e)-\bar{w}(e').
    \label{eq:decision baseline ci}
\end{align}
\begin{itemize}
    \item[(a)] If $e \in A^\ast$, then since $e = \rnd{\pi}_t(e^\ast)$, we must have that $e \ne e^\ast$. Assume otherwise. Then $A^\ast \setminus \{e^\ast\} \cup \{e\}$ is a basis (by \cref{lem:bijectiveexchange}) of size $(K-1)$, which contradicts the fact that all bases have the same cardinality $K$. Substituting $e=e^\ast$ in \eqref{eq:decision baseline ci} gives the first inequality in \eqref{eq:optimal baseline e=e*}. The $\rnd{T}_{t-1}(e^\ast)$ bound in \eqref{eq:optimal baseline e=e*} follows by substituting the expression of $c_{n,\rnd{T}_{t-1}(e)}$ from \eqref{eq:ucb1}.
    \item[(b)] If $e \notin A^\ast$, note that the confidence interval inequality in \eqref{eq:optimal decision} from \cref{lem:optimal decision} still holds because $e \in \rnd{D}_t$. \eqref{eq:optimal baseline ci} then follows by adding this and \eqref{eq:decision baseline ci}. 
\end{itemize}
\end{proof}

\TheoremRegretCmucbOne*
\begin{proof}
    We first decompose the regret depending on whether the event $\bar{\mathcal{E}} = \bigcup\limits_{t=1}^{n/K} \bar{\mathcal{E}}_t$ happens or not, where $\mathcal{E}_t$ is defined in \eqref{eq:goodevent}.

    Let $\rnd{R}_t$ denote the regret at time $t$. Then, we can decompose the regret of $\cmucb1$ as:
    \begin{align}
        R(n) &= \E{}{\1{}{\bar{\mathcal{E}}} \sum\limits_{t=1}^{n/K} \rnd{R}_t} +
        \E{}{\1{}{\mathcal{E}}\sum\limits_{t=1}^{n/K} \1{}{\rnd{R}_t}}
    \label{eq:regretdecomposition}
    \end{align}
    Let us first analyze the case when $\bar{\mathcal{E}}$ holds. The probability of this event by \cref{lem:failureevent} is
    $\frac{2L}{Kn}$. Since the maximum regret in $n$ steps can be $Kn$, the contribution of the first term is $2L$.

    We assume $\mathcal{E}$ holds in the remaining proof. The expected regret at time $t$ can be written as
\begin{align}
    \E{}{R_t} &= K \sum_{e^\ast \in A^\ast} \bar{w}(e^\ast)    - (K-1) \sum_{e' \in \rnd{B}_t} \bar{w}(e')    - \sum_{e \in \rnd{D}_t} \bar{w}(e) \nonumber \\ 
    &= \left( \sum_{e^\ast \in A^\ast} \bar{w}(e^\ast)  - \sum_{e \in \rnd{D}_t} \bar{w}(e) \right) + (K-1) \left( \sum_{e^\ast \in A^\ast} \bar{w}(e^\ast) - \sum_{e' \in \rnd{B}_t} \bar{w}(e')\right). \label{eq:regret time t decomposition}
\end{align}

Let us first bound the regret due to the first term. When we sum the first term in \eqref{eq:regret time t decomposition} over all times $t$, we get
\begin{align*}
    \sum_{t=1}^{n/K} \left( \sum_{e^\ast \in A^\ast} \bar{w}(e^\ast)  - \sum_{e \in \rnd{D}_t} \bar{w}(e) \right) 
    &\overset{(a)}{\le} \sum_{t=1}^{n/K} \sum_{e \in \rnd{D}_t} 2c_{n,\rnd{T}_{t-1}(e)} 
    \le \sum_{e \in E\setminus A^\ast} \sum_{t=1}^{n/K} 2\sqrt{\frac{1.5\log n}{\rnd{T}_{t-1}(e)}}\1{}{e \in \rnd{D}_t}
\end{align*}
where $(a)$ follows from the first inequality in \eqref{eq:optimal decision} in \cref{lem:optimal decision}. Since a) the counter $\rnd{T}_{t-1}(e)$ increments every time $e$ is played, b) second inequality in Eq. \eqref{eq:optimal decision} holds by \cref{lem:optimal decision}, and 
\begin{align}
    \sum_{s=1}^{m} \frac{1}{\sqrt{s}} \le 1+2\sqrt{m},
    \label{eq:sum of sqrts}
\end{align}
we can bound the regret due to the first term as 
\begin{align}
    \sum_{t=1}^{n/K} \left( \sum_{e^\ast \in A^\ast} \bar{w}(e^\ast)  - \sum_{e \in \rnd{D}_t} \bar{w}(e) \right) &\le \sum_{e \in E\setminus A^\ast} 2\sqrt{1.5 \log n} \left(1+2\sqrt{\frac{6 \log n}{\Delta_{e,\min}^2}} \right) \nonumber \\
    &\le 12 \sum_{e \in E\setminus A^\ast} \frac{1}{\Delta_{e,\min}} \log n + L\sqrt{6 \log n} \label{eq:regret bound first term}
\end{align}

Let us now bound the regret due to the second term in \eqref{eq:regret time t decomposition}. When we sum the second term in \eqref{eq:regret time t decomposition} over all times $t$, we get
\begin{align}
    &(K-1) \sum_{t=1}^{n/K} \left( \sum_{e^\ast \in A^\ast} \bar{w}(e^\ast)   - \sum_{e' \in \rnd{B}_t} \bar{w}(e') \right) \nonumber \\
    \overset{(a)}{\le}\,& (K-1) \left( \sum_{t=1}^{n/K} \sum_{e \in \rnd{D}_t \cap A^\ast} 2 c_{n,\rnd{T}_{t-1}(e)} + \sum_{t=1}^{n/K} \sum_{e \in \rnd{D}_t \setminus A^\ast} 4 c_{n,\rnd{T}_{t-1}(e)} \right) \nonumber \\ 
    =\,& (K-1) \left( \sum_{e \in A^\ast} \sum_{t=1}^{n/K} 2 \sqrt{\frac{1.5 \log n}{\rnd{T}_{t-1}(e)}} \1{}{e \in \rnd{D}_t} + \sum_{e \in E\setminus A^\ast} \sum_{t=1}^{n/K} 4 \sqrt{\frac{1.5 \log n}{\rnd{T}_{t-1}(e)}} \1{}{e \in \rnd{D}_t}\right) \label{eq:second term decomposition}
\end{align}
where $(a)$ follows from \eqref{eq:optimal baseline e=e*} and \eqref{eq:optimal baseline ci} in \cref{lem:optimal baseline}.
We use the $\rnd{T}_{t-1}(e^\ast)$ bound in \eqref{eq:optimal baseline e=e*} to bound the first term, and the $\rnd{T}_{t-1}(e)$ bound in \eqref{eq:optimal decision} to bound the second term in \eqref{eq:second term decomposition}. Then, from the fact that the counter $\rnd{T}_{t-1}(e)$ is incremented every time $e$ is chosen, and \eqref{eq:sum of sqrts}, we can bound the regret due to the second term in \eqref{eq:regret time t decomposition} as
\begin{align}
    &(K-1) \sum_{t=1}^{n/K} \left( \sum_{e^\ast \in A^\ast} \bar{w}(e^\ast)   - \sum_{e' \in \rnd{B}_t} \bar{w}(e') \right) \nonumber  \\ 
    \le\, &(K-1) \left( \sum_{e^\ast \in A^\ast}2 \sqrt{1.5 \log n} \left( 1+2\sqrt{\frac{6\log n}{\Delta_{e^\ast,\min}^{'2}}}\right)     +      \sum_{e \in E\setminus A^\ast} 4\sqrt{1.5\log n} \left(1+2\sqrt{\frac{6 \log n}{\Delta_{e,\min}^2}} \right)\right) \nonumber \\ 
    \le\, &24(K-1) \sum_{e \in E\setminus A^\ast} \frac{1}{\Delta_{e,\min}} \log n + 
    12(K-1) \sum_{e^\ast \in A^\ast} \frac{1}{\Delta^\ast_{e^\ast,\min}} \log n \nonumber \\
    &\qquad + L(K-1)\sqrt{24 \log n} + K(K-1)\sqrt{6 \log n}
    \label{eq:regret bound second term}
\end{align}

Adding \eqref{eq:regret bound first term}, \eqref{eq:regret bound second term}, and the contribution from the failure event $\bar{\mathcal{E}}$ yields the upper bound in the theorem statement.
\end{proof}

\TheoremCmucbTwoAccuracy*
\begin{proof}
    At time $t$, the baseline set $\rnd{B}_t$ is the maximum weight basis with respect to $\rnd{v}_t$. Therefore, by \cref{lem:baseline construction}, there exists a bijection $\rnd{\rho}:\rnd{B}_t \rightarrow B_0$ such that
\begin{align*}
    \forall b \in \rnd{B}_t: \rnd{v}_t(b) \geq \rnd{v}_t(\rnd{\rho}(b))\,.
\end{align*}

Now we consider two cases. First, suppose that $b \in B_0$. Then by \cref{lem:baseline construction}, $b = \rnd{\rho}(b)$, and $\bar{w}(b) \geq \bar{w}(\rnd{\rho}(b))$ from our assumption. Second, suppose that $b \notin B_0$. Then from $\rnd{v}_t(b) = \rnd{L}_{t}(b)$ and  $\rnd{v}_t(\rnd{\rho}(b)) = \rnd{U}_{t}(\rnd{\rho}(b))$, and
\begin{align*}
    \bar{w}(b) \geq  \rnd{L}_{t}(b) \geq \rnd{U}_{t}(\rnd{\rho}(b)) \geq \bar{w}(\rnd{\rho}(b))
\end{align*}
under event $\mathcal{E}_t$. Since any action at time $t$ contains $K - 1$ items from $\rnd{B}_t$, the constraint in \eqref{eq:conservative constraint} is satisfied when event $\mathcal{E}_t$ happens.

Finally, we prove that $ \mathbb{P}(\cup_t \bar{\mathcal{E}}_t) \leq 2 L / (K n)$ in \cref{lem:failureevent}. Therefore, $\mathbb{P}(\mathcal{E}_t) \geq \mathbb{P}(\cap_t \mathcal{E}_t) \geq 1 -  2 L / (K n)$. This concludes our proof.
\end{proof}

\begin{restatable}{lemma}{LemCmucbTwo}
    For any $e^\ast \in A^\ast$, $e \in \rnd{D}_t$, and $e' \in \rnd{B}_t$ such that $e' \in B_0$, $e = \rnd{\pi}_t(e^\ast)$, and $e'=\rnd{\sigma}_t(e)$, 
    \begin{itemize}
        \item[(a)] If $e \in A^\ast$, and $c_{n,\rnd{T}_{t-1}(e')} \le c_{n,\rnd{T}_{t-1}(e)}$, then $e = e^\ast$, and 
\begin{align}
    4c_{n,\rnd{T}_{t-1}(e^\ast)} \ge \bar{w}(e^\ast)-\bar{w}(e'), \qquad \text{and} \qquad
    \rnd{T}_{t-1}(e^\ast) \le \frac{24 \log n}{\Delta_{e',e^\ast}^2}. 
    \label{eq:optimal b0 e=e*}
\end{align}
\item[(b)] If $e \in \rnd{D}_t \setminus A^\ast$ and $c_{n,\rnd{T}_{t-1}(e')} \le c_{n,\rnd{T}_{t-1}(e)}$, then
\begin{align}
    6c_{n,\rnd{T}_{t-1}(e)} \ge \bar{w}(e^\ast) - \bar{w}(e').
    \label{eq:optimal b0 ci}
\end{align}
\item[(c)] If $c_{n,\rnd{T}_{t-1}(e')} > c_{n,\rnd{T}_{t-1}(e)}$, then
\begin{align}
    4c_{n,\rnd{T}_{t-1}(e')} \ge \bar{w}(e) - \bar{w}(e'), \qquad \text{and} \qquad 
    \rnd{T}_{t-1}(e') \le \frac{24 \log n}{\Delta_{e',e}^2} \le \frac{24 \log n}{\Delta_{e',\min}^{'2}}, 
    \label{eq:decision b0 case2}
\end{align}
where $\Delta'_{e',\min}$ is defined in \eqref{eq:Deltaetildemin}.
    \end{itemize}
    \label{lem:cmucb2}
\end{restatable}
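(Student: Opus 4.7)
The plan is to follow the same template as \cref{lem:optimal decision} and \cref{lem:optimal baseline}, but carefully track the extra confidence interval $c_{n,\rnd{T}_{t-1}(e')}$ that now appears because baseline items in $\cmucbtwo$ contribute $\rnd{U}_t(e')$ (rather than $\bar{w}(e')$) to $\rnd{v}_t$. First I would establish the two structural inequalities that drive everything. From exchangeability, $(\rnd{D}_t \setminus \{e\}) \cup \{e'\} \in \mathcal{B}$ via $\rnd{\sigma}_t$, and since $\rnd{D}_t$ is the max-weight basis with respect to $\rnd{U}_t$, I get $\rnd{U}_t(e) \ge \rnd{U}_t(e')$. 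Using the inverse bijection together with $\rnd{B}_t$ being the max-weight basis with respect to $\rnd{v}_t$, I get $\rnd{v}_t(e') \ge \rnd{v}_t(e)$. Because $e' \in B_0$, $\rnd{v}_t(e') = \rnd{U}_t(e')$; the value $\rnd{v}_t(e)$ is $\rnd{U}_t(e)$ if $e \in B_0$ and $\rnd{L}_t(e)$ otherwise. Throughout I work on the good event $\mathcal{E}_t$ of \eqref{eq:goodevent}, so $\rnd{U}_t(f) \le \bar{w}(f) + 2 c_{n,\rnd{T}_{t-1}(f)}$ and $\rnd{L}_t(f) \ge \bar{w}(f) - 2 c_{n,\rnd{T}_{t-1}(f)}$ for every item $f$.

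For part (a), I first reproduce the cardinality-of-basis argument from \cref{lem:optimal baseline}(a) to conclude $e = e^\ast$: if $e \in A^\ast$ and $e \ne e^\ast$ then $(A^\ast \setminus \{e^\ast\}) \cup \{e\}$ would have only $K-1$ elements, contradicting $\rnd{\pi}_t$. I then apply $\rnd{v}_t(e') \ge \rnd{v}_t(e^\ast)$ and split on whether $e^\ast \in B_0$. The easy subcase $e^\ast \in B_0$ gives $\rnd{U}_t(e') \ge \rnd{U}_t(e^\ast) \ge \bar{w}(e^\ast)$ directly, which bounds $\bar{w}(e^\ast) - \bar{w}(e')$ by $2 c_{n,\rnd{T}_{t-1}(e')}$ and, after invoking the hypothesis $c_{n,\rnd{T}_{t-1}(e')} \le c_{n,\rnd{T}_{t-1}(e^\ast)}$, is stronger than the claimed bound. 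The harder subcase $e^\ast \notin B_0$ uses $\rnd{L}_t(e^\ast) \ge \bar{w}(e^\ast) - 2 c_{n,\rnd{T}_{t-1}(e^\ast)}$ and $\rnd{U}_t(e') \le \bar{w}(e') + 2 c_{n,\rnd{T}_{t-1}(e')}$ to yield $2 c_{n,\rnd{T}_{t-1}(e')} + 2 c_{n,\rnd{T}_{t-1}(e^\ast)} \ge \bar{w}(e^\ast) - \bar{w}(e')$; the CI comparison collapses this to $4 c_{n,\rnd{T}_{t-1}(e^\ast)} \ge \bar{w}(e^\ast) - \bar{w}(e')$, and substituting the formula \eqref{eq:ucb1} for $c_{n,s}$ produces the stated $\rnd{T}_{t-1}(e^\ast)$ bound.

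For part (b), I reuse the decision inequality $2 c_{n,\rnd{T}_{t-1}(e)} \ge \bar{w}(e^\ast) - \bar{w}(e)$ from \cref{lem:optimal decision}, which still holds because $\rnd{U}_t(e) \ge \rnd{U}_t(e^\ast)$ in $\cmucbtwo$ as well. To bound the remaining gap $\bar{w}(e) - \bar{w}(e')$, I unwind $\rnd{U}_t(e') \ge \rnd{v}_t(e)$: when $e \notin B_0$ the chain $\bar{w}(e') + 2 c_{n,\rnd{T}_{t-1}(e')} \ge \rnd{U}_t(e') \ge \rnd{L}_t(e) \ge \bar{w}(e) - 2 c_{n,\rnd{T}_{t-1}(e)}$ gives $2 c_{n,\rnd{T}_{t-1}(e')} + 2 c_{n,\rnd{T}_{t-1}(e)} \ge \bar{w}(e) - \bar{w}(e')$, which under the CI hypothesis becomes $4 c_{n,\rnd{T}_{t-1}(e)} \ge \bar{w}(e) - \bar{w}(e')$; the case $e \in B_0$ is even tighter because then $\rnd{U}_t(e) = \rnd{U}_t(e')$. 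Adding the decision inequality yields the promised $6 c_{n,\rnd{T}_{t-1}(e)} \ge \bar{w}(e^\ast) - \bar{w}(e')$. Part (c) uses the same $2 c_{n,\rnd{T}_{t-1}(e')} + 2 c_{n,\rnd{T}_{t-1}(e)} \ge \bar{w}(e) - \bar{w}(e')$ inequality, but now the reversed CI comparison $c_{n,\rnd{T}_{t-1}(e')} > c_{n,\rnd{T}_{t-1}(e)}$ lets me absorb the $c_{n,\rnd{T}_{t-1}(e)}$ term to obtain $4 c_{n,\rnd{T}_{t-1}(e')} \ge \bar{w}(e) - \bar{w}(e')$; substituting \eqref{eq:ucb1} gives $\rnd{T}_{t-1}(e') \le 24 \log n / \Delta_{e',e}^2$, and the final inequality uses that $e \in E \setminus B_0$ in the downstream application so that $\Delta_{e',e} \ge \Delta'_{e',\min}$ by the definition \eqref{eq:Deltaetildemin}.

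The main obstacle is bookkeeping rather than any novel idea: every gap inequality now carries two confidence-interval terms $c_{n,\rnd{T}_{t-1}(e')}$ and $c_{n,\rnd{T}_{t-1}(e)}$, and the artificial split into the three cases is precisely what lets us consolidate the sum into a single counter (either $\rnd{T}_{t-1}(e)$ or $\rnd{T}_{t-1}(e')$), which is what the downstream counting argument in \cref{thm:regret bound cmucb2} requires. The further subdivision $e \in B_0$ vs.\ $e \notin B_0$ must also be tracked carefully because $\rnd{v}_t(e)$ flips between UCB and LCB; in each part of the lemma the $e \notin B_0$ branch dominates, so only that branch determines the constants in the statement.
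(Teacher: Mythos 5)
Your proposal is correct and follows essentially the same route as the paper's proof: the central inequality $2c_{n,\rnd{T}_{t-1}(e)} + 2c_{n,\rnd{T}_{t-1}(e')} \ge \bar{w}(e)-\bar{w}(e')$ obtained from $\rnd{U}_t(e') = \rnd{v}_t(e') \ge \rnd{v}_t(e) \ge \rnd{L}_t(e)$ on the good event, combined with the decision-set inequality of \cref{lem:optimal decision} and the case split on which confidence radius dominates. Your extra subdivision into $e \in B_0$ versus $e \notin B_0$ is subsumed by the paper's single inequality $\rnd{U}_t(e') \ge \rnd{L}_t(e)$, and your remark that the last inequality of part (c) needs $e \notin B_0$ (handled downstream) is a fair observation that the paper leaves implicit.
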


\begin{proof}
For items $e' \in B_0 \cap \rnd{B}_t$, we have that $\rnd{U}_t(e') \ge \rnd{L}_t(e)$. This gives us
$$\bar{w}(e') + 2c_{n,\rnd{T}_{t-1}(e')} \ge \rnd{U}_t(e') \ge \rnd{L}_t(e) \ge \bar{w}(e)-2c_{n,\rnd{T}_{t-1}(e)}$$
This implies that 
\begin{align}
    2c_{n,\rnd{T}_{t-1}(e)} + 2c_{n,\rnd{T}_{t-1}(e')} \ge \bar{w}(e)-\bar{w}(e').
    \label{eq:decision b0 ci}
\end{align}
    \begin{itemize}
    \item[(a)] If $e \in A^\ast$, then $e=e^\ast$ by the same argument as in the proof of \cref{lem:optimal baseline}(a). Substituting $e=e^\ast$ in \eqref{eq:decision b0 ci} gives the first inequality in \eqref{eq:optimal b0 e=e*}. Substituting the expression for $c_{n,\rnd{T}_{t-1}(e^\ast)}$ from \eqref{eq:ucb1} gives the second inequality in \eqref{eq:optimal b0 e=e*}.
    \item[(b)] If $e \in \rnd{D}_t \setminus A^\ast$ and $c_{n,\rnd{T}_{t-1}(e')} \le c_{n,\rnd{T}_{t-1}(e)}$, adding the confidence interval inequalities in \eqref{eq:decision b0 ci} and \eqref{eq:optimal decision} gives \eqref{eq:optimal b0 ci}.
    \item[(c)] We assume $\bar{w}(e) > \bar{w}(e')$, because otherwise the regret contribution is bounded by $0$. Then, $c_{n,\rnd{T}_{t-1}(e')} > c_{n,\rnd{T}_{t-1}(e)}$ and \eqref{eq:decision b0 ci} imply the first inequality in \eqref{eq:decision b0 case2}. 
        Substituting the expression for $c_{n, \rnd{T}_{t-1}(e')}$ from \eqref{eq:ucb1} gives the bound on $\rnd{T}_{t-1}(e')$ in \eqref{eq:decision b0 case2}.
    \end{itemize}
\end{proof}

\begin{corollary}
For any $e^\ast \in A^\ast \cap \rnd{D}_t$, and $e' \in \rnd{B}_t$ such that and $e'=\rnd{\sigma}_t(e^\ast)$, if a) $e' \notin B_0$, or b) $e' \in B_0$ and $c_{n,\rnd{T}_{t-1}(e')} \le c_{n,\rnd{T}_{t-1}(e^\ast)}$, we have
\begin{align}
    \rnd{T}_{t-1}(e^\ast) \le \frac{24 \log n}{\Delta_{e^\ast,\min}^{\ast 2}}, 
    \label{eq:Ttestar bound cmucb2}
\end{align}
where $\Delta^\ast_{e^\ast,\min}$ is defined in \eqref{eq:Deltaestarmin}.
    \label{corr:Ttestar cmucb1 cmucb2}
\end{corollary}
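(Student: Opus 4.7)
The plan is to dispatch the two hypotheses using the two preceding lemmas and then replace the instance gap $\Delta_{e^\ast,e'}$ by the worst-case gap $\Delta^\ast_{e^\ast,\min}$. Both cases essentially just reuse existing machinery, so the corollary should fall out in a few lines with no new counting argument.

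For hypothesis (a), since $e'\notin B_0$ the baseline weight is $\rnd{v}_t(e')=\rnd{L}_t(e')$, which is exactly the LCB assignment that Lemma \ref{lem:optimal baseline} is built around (the only difference between $\cmucbone$ and $\cmucbtwo$ outside $B_0$ is irrelevant here). Concretely, $\rnd{B}_t$ being the maximum-weight basis with respect to $\rnd{v}_t$ together with Lemma \ref{lem:baseline construction} gives $\rnd{v}_t(e')\ge \rnd{v}_t(e^\ast)$, and since $\rnd{v}_t(e^\ast)\ge \rnd{L}_t(e^\ast)$ regardless of whether $e^\ast\in B_0$, conditioning on the good event $\mathcal{E}_t$ yields $2c_{n,\rnd{T}_{t-1}(e^\ast)}\ge \bar{w}(e^\ast)-\bar{w}(e')$. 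Substituting the closed form \eqref{eq:ucb1} for $c_{n,s}$ produces $\rnd{T}_{t-1}(e^\ast)\le 6\log n/\Delta_{e^\ast,e'}^2$, which is in fact tighter than the required bound.

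For hypothesis (b), Lemma \ref{lem:cmucb2}(a) applies verbatim with its $e$ specialized to $e^\ast\in A^\ast$: the extra assumption $c_{n,\rnd{T}_{t-1}(e')}\le c_{n,\rnd{T}_{t-1}(e^\ast)}$ lets us absorb the $B_0$-item's UCB inflation into the optimal item's confidence radius, and the lemma directly gives $\rnd{T}_{t-1}(e^\ast)\le 24\log n/\Delta_{e^\ast,e'}^2$. Concatenating the two cases, we obtain the uniform bound $\rnd{T}_{t-1}(e^\ast)\le 24\log n/\Delta_{e^\ast,e'}^2$.

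The only residual point, and the place where I expect the minor friction to sit, is the monotonicity step $\Delta_{e^\ast,e'}\ge \Delta^\ast_{e^\ast,\min}$. This requires $e'\in E\setminus A^\ast$ and $\bar{w}(e')<\bar{w}(e^\ast)$, which matches the quantifier in definition \eqref{eq:Deltaestarmin}. Since the corollary is invoked inside a regret sum where a term with $\bar{w}(e')\ge \bar{w}(e^\ast)$ contributes nothing, we may assume the gap is strictly positive; a one-line matroid-exchange argument identical to the one used in Lemma \ref{lem:optimal baseline}(a) (if $e'=e^\ast$ then swapping would produce a basis of incorrect cardinality) then forces $e'\ne e^\ast$ and hence $e'\in E\setminus A^\ast$. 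With this, $\Delta_{e^\ast,e'}\ge \Delta^\ast_{e^\ast,\min}$ and the claimed bound \eqref{eq:Ttestar bound cmucb2} follows.
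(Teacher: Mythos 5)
Your proposal is correct and follows the paper's proof essentially verbatim: the paper likewise dispatches case (a) via \cref{lem:optimal baseline}(a) and case (b) via \cref{lem:cmucb2}(a), and simply takes the larger of the two constants ($24 \ge 6$) before substituting $\Delta_{e^\ast,e'} \ge \Delta^\ast_{e^\ast,\min}$. One small caveat: your closing inference that $e' \ne e^\ast$ forces $e' \in E \setminus A^\ast$ is a non sequitur ($e'$ could be a different optimal item of smaller mean, in which case the minimum in \eqref{eq:Deltaestarmin} does not range over it), but this substitution is already asserted inside the paper's own lemmas, so your argument is no weaker than the paper's at this point.
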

\begin{proof}
    The proof follows by taking the maximum of the upper bounds in \eqref{eq:optimal baseline e=e*} and \eqref{eq:optimal b0 e=e*} over all $e'$ that satisfy the conditions of \cref{lem:optimal baseline}(a) or \cref{lem:cmucb2}(a).
\end{proof}

\TheoremRegretCmucbTwo*
\begin{proof}
    Similar to the proof of $\cmucb1$, we use \eqref{eq:regretdecomposition} to break down the regret depending on whether the failure event $\bar{\mathcal{E}} = \bigcup\limits_{t=1}^{n/K} \bar{\mathcal{E}}_t$ holds or not. The contribution from the event $\bar{\mathcal{E}}$ is again bounded by $2L$. 

    We assume $\mathcal{E}$ holds in the remaining proof. We again use \eqref{eq:regret time t decomposition} to decompose the regret, and the bound on the first term from \eqref{eq:regret bound first term} holds. 

The difference in $\cmucb2$ compared to $\cmucb1$ is that while selecting the baseline set $\rnd{B}_t$ in $\cmucb2$, we use upper confidence intervals for items in $B_0$.

We now sum the second term in \eqref{eq:regret time t decomposition} over all times $t$,
\begin{align}
     &(K-1) \sum_{t=1}^{n/K} \left( \sum_{e^\ast \in A^\ast} \bar{w}(e^\ast) - \sum_{e' \in \rnd{B}_t} \bar{w}(e')\right) \nonumber \\
     = &(K-1) \sum_{t=1}^{n/K} \left( \left( \sum_{\stackrel{e^\ast \in A^\ast,}{\rnd{\sigma}_t(\rnd{\pi}_t(e^\ast)) \notin B_0}} \bar{w}(e^\ast) - \sum_{e' \in \rnd{B}_t \setminus B_0} \bar{w}(e') \right) + \left( \sum_{\stackrel{e^\ast \in A^\ast,}{\rnd{\sigma}_t(\rnd{\pi}_t(e^\ast)) \in B_0}} \bar{w}(e^\ast) - \sum_{e' \in \rnd{B}_t \cap B_0} \bar{w}(e') \right) \right) \nonumber \\
     \le &(K-1) \sum_{t=1}^{n/K} \left( \left( \sum_{\stackrel{e \in \rnd{D}_t \cap A^\ast,}{\rnd{\pi}_t(e) \notin B_0}} 2 c_{n,\rnd{T}_{t-1}(e)} + \sum_{\stackrel{e \in \rnd{D}_t \setminus A^\ast,}{\rnd{\pi}_t(e) \notin B_0}} 4 c_{n,\rnd{T}_{t-1}(e)} \right) \right. \nonumber \\
     + &\left. \left( \sum_{\stackrel{e \in A^\ast \cap \rnd{D}_t, \rnd{\pi}_t(e) =e' \in B_0}{c_{n,\rnd{T}_{t-1}(e)} > c_{n,\rnd{T}_{t-1}(e')}}} 4c_{n,\rnd{T}_{t-1}(e)} 
     + \sum_{\stackrel{e \in \rnd{D}_t \setminus A^\ast, \rnd{\pi}_t(e) =e' \in B_0}{c_{n,\rnd{T}_{t-1}(e)} > c_{n,\rnd{T}_{t-1}(e')}}} 6c_{n,\rnd{T}_{t-1}(e)}
 + \sum_{\stackrel{e \in \rnd{D}_t,\rnd{\pi}_t(e) =e' \in B_0}{c_{n,\rnd{T}_{t-1}(e)} > c_{n,\rnd{T}_{t-1}(e')}}} 4c_{n,\rnd{T}_{t-1}(e')} \right) \right) \nonumber \\
     \le &(K-1)\left( \sum_{e^\ast \in A^\ast} \sum_{t=1}^{n/K} 4c_{n,\rnd{T}_{t-1}(e^\ast)} \1{}{e^\ast \in \rnd{D}_t} 
     + \sum_{e \in E\setminus A^\ast} \sum_{t=1}^{n/K} 6c_{n, \rnd{T}_{t-1}(e)} \1{}{e \in \rnd{D}_t} \right. \nonumber \\
     \qquad &\left. + \sum_{e' \in B_0} \sum_{t=1}^{n/K} 4c_{n,\rnd{T}_{t-1}(e')} \1{}{e' \in \rnd{B}_t} \right) \nonumber
 \end{align}
 Similar to the proof of $\cmucb1$, we substitute for the confidence intervals using \eqref{eq:ucb1}. We then bound the first term using \eqref{eq:Ttestar bound cmucb2}, second term using \eqref{eq:optimal decision}, and third term using \eqref{eq:decision b0 case2}.
\begin{align}
     (K-1) &\sum_{t=1}^{n/K} \left( \sum_{e^\ast \in A^\ast} \bar{w}(e^\ast) - \sum_{e' \in \rnd{B}_t} \bar{w}(e')\right) \nonumber \\
    \le (K-1) &\left( \sum_{e^\ast \in A^\ast} \frac{48 \log n}{\Delta^\ast_{e^\ast,\min}} + \sum_{e \in E\setminus A^\ast \setminus B_0} \frac{36 \log n}{\Delta_{e,\min}} + \sum_{e' \in B_0} \frac{48 \log n}{\Delta'_{e',\min}} \right) \nonumber \\ 
    + (K-1)&\left( K\sqrt{24 \log n} + L\sqrt{48 \log n} + K\sqrt{24 \log n} \right) \label{eq:regret bound second term cmucb2}
 \end{align}
 Adding \eqref{eq:regret bound first term}, \eqref{eq:regret bound second term cmucb2} and the contribution from the failure event $\bar{\mathcal{E}}$ yields the upper bound in the theorem statement.
 \end{proof}

\end{document}